\pgfplotsset{minor tick num=1,
    legend style={font=\scriptsize},
    every axis label/.append style={font=\scriptsize},
    yticklabel style={/pgf/number format/fixed},
    every axis x label/.style={at={(0.5,0)},yshift=-15pt},
    every axis y label/.style={at={(0,0.5)},xshift=-30pt,rotate=90},
    tick label style = {font=\scriptsize},
    every axis plot/.append style={line width=0.6pt},
    axis background/.append style={fill=grey!2},
    every axis/.append style={max space between ticks=30},
    scaled y ticks=false
}
\newlength\figH
\newlength\figW
\definecolor{grey}{gray}{0}
\title{\huge Matrix completion and extrapolation via kernel regression
 \thanks{This work was supported by the Ministerio de Economia y Competitividad of the Spanish Government and ERDF funds (TEC2016-75067-C4-2-R,TEC2015-515 69648-REDC),  Catalan Government funds (2017 SGR 578 AGAUR), and NSF grants (1500713, 1514056, 1711471 and 1509040).}}
\newcommand{\trace}[1]{\text{Tr}(#1)}
\newcommand{\ex}[1]{\mathbb{E}_{\bm e}\{ #1\}}
\newcommand{\norm}[1]{\left|\left|#1\right|\right|^2_2}
\newcommand{\nuclear}[1]{\left|\left|#1\right|\right|_*}
\newcommand{\frob}[1]{\left|\left|#1\right| \right|_\text{F}^2}
\newcommand{\frobs}[1]{\left|\left|#1\right| \right|_\text{F}}
\def\Y{\bm{Y}}
\def\y{\bm y}
\def\X{\bm X}
\def\x{\bm x}
\def\m{\bm m}
\def\mb{\bar{\bm m}}
\def\f{\bm f}
\def\g{\bm g}
\def\k{\bm k}
\def\e{\bm e}
\def\e{\bm{e}}
\def\M{\bm M}
\def\H{\bm H}
\def\v{\bm v}
\def\hz{\bm{\hat{v}}}
\def\A{\bm A}
\def\B{\bm B}
\def\I{\bm I}
\def\Q{\bm Q}
\def\W{\bm W}
\def\C{\bm C}
\def\T{\bm T}
\def\P{\bm P}
\def\S{\bm S}
\def\D{\bm D}
\def\U{\bm U}
\def\V{\bm V}
\def\L{\bm L}
\def\F{\bm F}
\def\E{\bm E}
\def\K{\bm K}
\def\tK{{\bm {\tilde K}}}
\def\w{\bm w}
\def\h{\bm h}
\def\gam{\bm{\gamma}}
\def\hgam{\hat{\bm{\gamma}}}
\def\hxi{\hat{\bm{\xi}}}
\def\bxi{\bm{\xi}}
\def\bphi{\bm \Phi}
\def\tbphi{\bm {\tilde{\Phi}}}
\def\Sig{\bm{\Sigma}}
\def\balp{\bm{\alpha}}
\def\kapx{\kappa_x}
\def\kapy{\kappa_y}
\def\kapz{\kappa_z}
\def\th{^\text{th}}
\DeclareMathOperator*{\argmin}{\arg\,\min}
\newtheorem{theorem}{Theorem}
\newtheorem{lemma}{Lemma}
\author{Pere~Gim\'enez-Febrer$^1$,
    Alba~Pag\`es-Zamora$^1$, and
    Georgios B. Giannakis$^2$\\
    $^1$SPCOM Group, Universitat Polit\`ecnica de Catalunya-Barcelona Tech, Spain\\
    $^2$Dept. of ECE and Digital Technology Center, University of Minnesota, USA
}
\begin{document}
\maketitle
\begin{abstract}
Matrix completion and extrapolation (MCEX) are dealt with here over reproducing kernel Hilbert spaces (RKHSs) in order to account for prior information present in the available data. Aiming at a fast and low-complexity solver, the task is formulated as one of kernel ridge regression. The resultant MCEX algorithm can also afford online implementation, while the class of kernel functions also encompasses several existing  approaches to MC with prior information. Numerical tests on synthetic and real datasets show that the novel approach is faster than widespread methods such as alternating least-squares (ALS) or stochastic gradient descent (SGD), and that the recovery error is reduced, especially when dealing with noisy data.
\end{abstract}
\begin{IEEEkeywords} Matrix completion, extrapolation, RKHS, kernel ridge regression, graphs, online learning \end{IEEEkeywords}
\section{Introduction}
With only a subset of its entries available, matrix completion (MC) amounts to recovering the unavailable entries by leveraging just the low-rank attribute of the matrix itself~\cite{candes}. The relevant task arises in applications as diverse as image restoration~\cite{ji2010robust}, sensor networks~\cite{yi2015partial}, and recommender systems~\cite{koren2009matrix}. To save power for instance, only a fraction of sensors may collect and transmit measurements to a fusion center, where the available spatio-temporal data can be organized in a matrix format, and the unavailable ones can be eventually interpolated via MC~\cite{yi2015partial}. Similarly,  collaborative filtering of ratings given by users to a small number of items are stored in a sparse matrix, and the objective is to predict their ratings for the rest of the items~\cite{koren2009matrix}.

Existing MC approaches rely on some form of rank minimization or low-rank matrix factorization. Specifically, ~\cite{candes} proves that when MC is formulated as the minimization of the nuclear norm subject to the constraint that the observed entries remain unchanged, exact recovery is possible under mild assumptions; see also~\cite{candes2010matrix} where reliable recovery from a few observations is established even in the presence of additive noise. Alternatively,~\cite{koren2009matrix} replaces the nuclear norm by two low-rank factor matrices that are identified in order to recover the complete matrix.

While the low-rank assumption can be sufficient for reliable recovery, prior information about the unknown matrix can be also accounted to improve the completion outcome. Forms of prior information can include sparsity~\cite{yi2015partial}, local smoothness~\cite{cheng2013stcdg}, and interdependencies encoded by graphs~\cite{kalofolias2014matrix,chen,rao2015collaborative,ma2011recommender}. These approaches exploit the available similarity information or prior knowledge of the bases spanning the column or row spaces of the unknown matrix. In this regard, reproducing kernel Hilbert spaces (RKHSs) constitute a powerful tool for leveraging available prior information thanks to the kernel functions, which measure the similarity between pairs of points in an input space. Prompted by this,~\cite{abernethy2006low,bazerque2013,zhou2012,stock2018comparative} postulate that columns of the factor matrices belong to a pair of RKHSs spanned by their respective kernels. In doing so, a given structure or similarity between rows or columns is effected on the recovered matrix. Upon choosing a suitable kernel function, \cite{yi2015partial} as well as \cite{cheng2013stcdg,kalofolias2014matrix,chen,rao2015collaborative,ma2011recommender} can be cast into the RKHS framework. In addition to improving MC performance, kernel-based approaches  also enable extrapolation of rows and columns, even when all their entries are missing - a task impossible by the standard MC approaches in e.g.~\cite{candes} and~\cite{koren2009matrix}.

One major hurdle in MC is the computational cost as the matrix size grows. In its formulation as a rank minimization task, MC can be solved via semidefinite programming~\cite{candes}, or proximal gradient minimization~\cite{ma2011fixed,cai2010singular,chen,gimenez}, which entails a singular value decomposition of the recovered matrix per iteration. Instead, algorithms with lower computational cost are available for the bi-convex formulation based on matrix factorization~\cite{koren2009matrix}. These commonly rely on iterative minimization schemes such as alternating least-squares (ALS)~\cite{hastie2015matrix,jain2013low} or stochastic gradient descent (SGD)~\cite{gemulla2011large,zhou2012}. With regard to kernel-based MC, the corresponding algorithms rely on alternating convex minimization and semidefinite programming~\cite{abernethy2006low}, block coordinate descent~\cite{bazerque2013}, and SGD~\cite{zhou2012}. However, algorithms based on alternating minimization only converge to the minimum after infinite iterations. In addition, existing kernel-based algorithms adopt a specific sampling pattern or do not effectively make use of the Representer Theorem for RKHSs that will turn out to be valuable in further reducing the complexity, especially when the number of observed entries is small.

The present contribution offers an RKHS-based approach to MCEX that also unifies and broadens the scope of MC approaches, while offering reduced complexity algorithms that scale well with the data size. Specifically, we develop a novel MC solver via kernel ridge regression as a convex alternative to the nonconvex factorization-based formulation that offers a closed-form solution. Through an explicit sampling matrix, the proposed method offers an encompassing sampling pattern, which further enables the derivation of upper bounds on the mean-square error. Moreover, an approximate solution to our MCEX regression formulation is developed that also enables online implementation using SGD. Finally, means of incorporating prior information through kernels is discussed in the RKHS framework.

The rest of the paper paper is organized as follows. Section II outlines the RKHS formulation and the kernel regression task. Section III unifies the existing methods for MC under the RKHS umbrella, while Section IV introduces our proposed Kronecker kernel MCEX (KKMCEX) approach. Section V develops our ridge regression MCEX (RRMCEX) algorithm, an accelerated version of KKMCEX, and its online variant. Section VI deals with the construction of kernel matrices. Finally, Section VII presents numerical tests, and Section VIII concludes the paper.

\noindent \textbf{Notation.} Boldface lower case fonts denote column vectors, and boldface uppercase fonts denote matrices. The $(i,j)$th entry of matrix $\A$ is $\A_{i,j}$, and the $i^{\text{th}}$ entry of vector $\bm a$ is $\bm a_i$. Superscripts $^T$ and $^\dagger$ denote transpose and pseudoinverse, respectively; while hat $\,\hat{}\,$ is used for estimates. Matrix $\F\in\mathcal{H}$ means that its columns belong to a vector space $\mathcal{H}$. The symbols $\I$ and $\bm 1$ stand for the identity matrix and the all-ones vector of appropriate size, specified by the context. The trace operator is $\text{Tr}(\cdot)$, the function eig($\A$) returns the diagonal eigenvalue matrix of $\A$ ordered in ascending order, and $\lambda_k(\A)$ denotes the $k^{\text{th}}$ eigenvalue of $\A$ with $\lambda_k(\A)\leq\lambda_{k+1}(\A)$.
\section{Preliminaries}
\label{sec:background}
Consider a set of $N$ input-measurement pairs $\{(x_i,m_i)\}^N_{i=1}$ in $\mathcal{X}\times\mathbb{R}$, where $\mathcal{X}:=\{x_1,\ldots,x_N\}$ is the input space, $\mathbb{R}$ denotes the set of real numbers, and measurements obey the model
\begin{equation}\label{eq:sigmodel}
  m_i = f(x_i) + e_i
\end{equation}
where $f:\mathcal{X}\rightarrow\mathbb{R}$ is an unknown function and $e_i\in\mathbb{R}$ is noise. We assume this function belongs to an RKHS 
\begin{equation}\label{eq:hilb}
  \mathcal{H}_x:=\{f:f(x_i) = \sum^N_{j=1} \alpha_j \kapx(x_i,x_j), \:\:\: \alpha_j \in \mathbb{R}\}
\end{equation}
where $\kapx:\mathcal{X}\times\mathcal{X}\rightarrow\mathbb{R}$ is the kernel function that spans $\mathcal{H}_x$, and $\{\alpha_i\}^N_{i=1}$ are weight coefficients. An RKHS is a complete linear space endowed with an inner product that satisfies the reproducing property~\cite{shawe}. If $\langle f,f'\rangle_{\mathcal{H}_x}$ denotes the inner product in $\mathcal{H}_x$ between functions $f$ and $f'$, the reproducing property states that $f(x_i) = \langle f,\kapx(\cdot,x_i)\rangle_{\mathcal{H}_x}$; that is, $f$ in $\mathcal{H}_x$ can be evaluated at $x_i$ by taking the inner product between $f$ and $\kapx(\cdot,x_i)$. With $\{\alpha_i\}^N_{i=1}$ and $\{\alpha'_i\}^N_{i=1}$ denoting the coefficients of $f$ and $f'$ in (\ref{eq:hilb}) respectively, we have $\langle f,f'\rangle_{\mathcal{H}_x} := \sum_{i=1}^N\sum_{j=1}^N \alpha_i\alpha_j' \kapx(x_i,x_j)$; that is,
\begin{align}
  \langle f,f'\rangle_{\mathcal{H}_x} = \balp^T\K_x\balp'\label{eq:innerprod}
\end{align}
where $\balp := [\alpha_1,\ldots,\alpha_N]^T$, $\balp' := [\alpha_1',\ldots,\alpha_N']^T$ and $(\K_x)_{i,j}:=\kapx(x_i,x_j)$. In order for $\langle \cdot,\cdot \rangle_{\mathcal{H}_x}$ in (\ref{eq:innerprod}) to be an inner product, $\kapx$ must be symmetric and semipositive definite, meaning $\langle f,f\rangle_{\mathcal{H}_x} \geq 0\: \forall f\in\mathcal{H}_x$. As a consequence, $\K_x$ will be symmetric positive semidefinite since $\balp^T\K_x\balp \geq 0 \:\: \forall\, \balp \in \mathbb{R}^N$. 

While $\kapx$ is usually interpreted as a measure of similarity between two elements in $\mathcal{X}$, it can also be seen as the inner product of corresponding two elements in feature space $\mathcal{F}$ to which $\mathcal{X}$ can be mapped using function $\phi_x:\mathcal{X}\rightarrow\mathcal{F}$. Formally, we write 
\begin{equation}\label{eq:kerdot}
    \kapx(x_i,x_j)=\langle\phi_x(x_i),\phi_x(x_j)\rangle_{\mathcal{F}}.
\end{equation}
Function $\phi_x$ is referred to as feature map, and its choice depends on the application. For an input space of text files, for example, the files could be mapped to a feature vector that tracks the number of words, lines, and blank spaces in the file. Since $\phi_x$ can potentially have infinite dimension, evaluating the kernel using~\eqref{eq:kerdot} might be prohibitively expensive. This motivates specifying the kernel through a similarity function in $\mathcal{X}$, which bypasses the explicit computation of the inner product in $\mathcal{F}$. Typical examples include the Gaussian kernel $\kapx(x_i,x_j) = \text{exp}\{-\norm{x_i-x_j}/(2\eta)\}$ with $\eta$ being a free parameter, and the polynomial kernel~\cite{friedman2001elements}. In certain cases however, it is difficult to obtain the kernel similarity function on the input space. Such cases include metric input spaces with misses (as in MC), and non-metric spaces. 
The alternative to both is deriving the kernel from prior information. For instance, if we have a graph connecting the points in $\mathcal{X}$, a kernel can be obtained from the graph Laplacian~\cite{romero}. 

Having introduced the basics of RKHS, we proceed with the kernel regression task, where given $\{m_i\}^N_{i=1}$ we seek to obtain 
\begin{equation}\label{eq:rrf}
    \hat{f} = \argmin_{f\in\mathcal{H}_x} {1\over N}\sum_{i=1}^N l(m_i,f(x_i)) +  \mu' ||f||^2_{\mathcal{H}_x}
\end{equation}
with $l(\cdot)$ denoting the loss,  $\mu' \in \mathbb{R}^+$ the regularization parameter, and $||f||_{\mathcal{H}_x}:=\langle f,f\rangle_{\mathcal{H}_x}$ the norm induced by the inner product  in (\ref{eq:innerprod}). We will henceforth focus on the square loss $l(m_i,f(x_i)) := (m_i - f(x_i))^2$. 
Using $\K_x$, consider without loss of generality expressing the vector $\f:=[f(x_1),\ldots,f(x_N)]^T$ as $\f=\K_x\balp$, where $\balp:=[\alpha_1,\ldots,\alpha_N]^T$. Using the latter in the square loss,~(\ref{eq:rrf}) boils down to a kernel ridge regression (KRR) problem that can be solved by estimating $\balp$ as
\begin{equation}\label{eq:rralp}
  \hat{\balp} = \argmin_{\balp\in\mathbb{R}^N} \norm{\m - \K_x \balp} +\mu \mspace{2mu} \balp^T \K_x \balp
\end{equation}
where $\m := [m_1,\ldots,m_N]^T$ and $\mu=N\mu'$. The weights can be found in closed form as 
\begin{equation}\label{eq:rralpest}
\hat{\balp} = (\K_x+\mu\I)^{-1}\m
\end{equation}
and the estimate of the sought function is obtained as $\hat{\f}=\K_x\hat{\balp}$. 

\section{Kernel-based MCEX}
\label{sec:mc}

Matrix completion considers $\F\in\mathbb{R}^{N\times L}$ of rank $r$ observed through a $N\times L$ matrix of noisy observations
\begin{equation}\label{eq:matm}
    \M = P_{\Omega}(\F+\bm \E)
\end{equation}
where $\Omega\subseteq\{1,\ldots,N\}\times \{1,\ldots,L\}$ is the sampling set of cardinality $S=|\Omega|$ containing the indices of the observed entries; $P_{\Omega}(\cdot)$ is a projection operator that sets to zero the entries with index $(i,j)\notin \Omega$ and leaves the rest unchanged; and, $\bm\E\in\mathbb{R}^{N\times L}$ is a noise matrix. According to~\cite{candes2010matrix}, one can recover $\bm F$ from $\M$ with an error proportional to the magnitude of $\frob{\E}$ by solving the following convex optimization problem:
\begin{align}\label{eq:nnormc}
    &\min_{\F\in{\mathbb{R}^{N\times L}}}
    & \mspace{-50mu}& \text{rank}(\F)  \nonumber \\
    & \text{subject to}
    & \mspace{-50mu}& \frob{P_{\Omega}(\F - \M)} \leq \delta
\end{align}
where  $\frobs{\cdot}$ is the Frobenius norm, and we assume $\frob{P_\Omega(\E)} \leq \delta$ for some $\delta > 0$.
Since solving (\ref{eq:nnormc}) is NP-hard, the nuclear norm $\nuclear{\F} := \text{Tr}(\sqrt{\F^T\F})$ can be used to replace the rank to obtain the convex problem~\cite{ma2011fixed,chen}
\begin{equation}\label{eq:nuclmc}
  \min_{\F\in{\mathbb{R}^{N\times L}}} \frob{P_\Omega(\M-\F)} + \mu\nuclear{\F}.
\end{equation}
Because $\F$ is low rank, it is always possible to factorize it as $\F=\W\H^T$, where $\W \in \mathbb{R}^{N\times p}$ and $\H \in \mathbb{R}^{L\times p}$ are the latent factor matrices with $p \geq r$. This factorization allows expressing the nuclear norm as~\cite{srebro2005maximum} $\nuclear{\F} = \min_{\F=\W\H^T} {1\over 2}\left(\frob{\W} + \frob{\H}\right)$
which allows reformulating (\ref{eq:nuclmc}) as 
\begin{equation}\label{eq:mclag}
    \{\hat{\W}\!,\!\hat{\H}\} \!=\! \argmin_{\substack{\W\in\mathbb{R}^{N\times p}\\\H\in\mathbb{R}^{L\times p}}} \frob{P_\Omega(\!\M\!-\!\W\H^T)} \!+ \mu\!\left(\frob{\W} \!+\! \frob{\H}\right)
\end{equation}
and yields $\hat{\F}=\hat{\W}\hat{\H}^T$. While the solutions to (\ref{eq:nuclmc}) and (\ref{eq:mclag}) are equivalent when the rank of the matrix minimizing~\eqref{eq:nuclmc} is smaller than $p$\cite{hastie2015matrix}, solving (\ref{eq:nuclmc}) can be costlier since it involves the computation of the singular values of the matrix. On the other hand, since (\ref{eq:mclag}) is bi-convex it can be solved by alternately optimizing $\W$ and $\H$, e.g. via ALS~\cite{jain2013low} or SGD iterations~\cite{gemulla2011large}. Moreover, leveraging the structure of (\ref{eq:mclag}), it is also possible to optimize one row from each factor matrix at a time instead of updating the full factor matrices, which enables faster and also online and distributed implementations~\cite{teflioudi2012}.


Aiming at a kernel-based MCEX, we model the columns and rows of $\F$ as functions that belong to two different RKHSs. To this end, consider the input spaces $\mathcal{X}:=\{x_1,\ldots,x_N\}$ and $\mathcal{Y}:=\{y_1,\ldots,y_L\}$ for the column and row functions, respectively. In the user-movie ratings paradigm, $\mathcal{X}$ could be the set of users, and $\mathcal{Y}$ the set of movies. Then $\F:=[\f_1,\ldots,\f_L]$ is formed with columns $\f_l := [f_l(x_1),\ldots,f_l(x_N)]^T$
 with $f_l:\mathcal{X}\rightarrow\mathbb{R}$. Likewise, we rewrite $\F := [\g_1,\ldots,\g_N]^T$, with rows $\g_n^T:= [g_n(y_1),\ldots,g_n(y_L)]$
and $g_n:\mathcal{Y}\rightarrow\mathbb{R}$ .  We further assume that $f_l\in\mathcal{H}_x\: \forall l=1,\ldots,L$ and $g_n\in\mathcal{H}_y \: \forall n=1,\ldots,N$, where 
\begin{equation}\label{eq:hilbrow}
  \mathcal{H}_x:=\{f:f(x_i) = \sum^N_{j=1} \alpha_j \kapx(x_i,x_j), \:\:\: \alpha_j \in \mathbb{R}\}
\end{equation}
\begin{equation}\label{eq:hilbcol}
  \mathcal{H}_y:=\{g:g(y_i) = \sum^L_{j=1} \beta_j \kapy(y_i,y_j), \:\:\: \beta_j \in \mathbb{R}\}
\end{equation}
and $\kapx:\mathcal{X}\times\mathcal{X}\rightarrow \mathbb{R}$ and $\kapy:\mathcal{Y}\times\mathcal{Y}\rightarrow \mathbb{R}$ are the kernels forming $\K_x\in\mathbb{R}^{N\times N}$ and $\K_y\in\mathbb{R}^{L\times L}$, respectively. 

Since $\W$ and $\H$ span the column and row spaces of $\F$, their columns belong to $\mathcal{H}_x$ and $ \mathcal{H}_y$ as well. Thus, the $m^\text{th}$ column of $\W$ is
\begin{equation}\label{eq:vecw}
    \w_m:=[w_m(x_1),\ldots,w_m(x_N)]^T
\end{equation}
where $w_m:\mathcal{X}\rightarrow\mathbb{R}$ and $w_m\in\mathcal{H}_x\:\forall m=1,\ldots,p$, and the $m^\text{th}$ column of $\H$ is
\begin{equation}\label{eq:vech}
\h_m:=[h_m(y_1),\ldots,h_m(y_L)]^T
\end{equation}
where $h_m:\mathcal{Y}\rightarrow\mathbb{R}$ and $h_m\in\mathcal{H}_y\:\forall m=1,\ldots,p$. Hence, instead of simply promoting a small Frobenius norm for the factor matrices as in (\ref{eq:mclag}), we can also promote smoothness on their respective RKHS. The kernel-based formulation in~\cite{bazerque2013} estimates the factor matrices by solving
\begin{align}
    \{\hat{\W},\hat{\H}\}=\argmin_{\substack{\W\in\mathcal{H}_x\\\H\in\mathcal{H}_y}} \ &\frob{P_\Omega(\M-\W\H^T)}  \label{eq:factker}\\& \mspace{-50mu}+ \mu\trace{\W^T\K_x^{-1}\W} + \mu\trace{\H^T\K_y^{-1}\H}.\nonumber
\end{align}
Note that (\ref{eq:factker}) is equivalent to (\ref{eq:mclag}) for $\K_x=\I$ and $\K_y=\I$. Since the constraints $\W\in\mathcal{H}_x$ and $\H\in\mathcal{H}_y$  can be challenging to account for when solving (\ref{eq:factker}),  we can instead find the coefficients that generate $\W$ and $\H$ in their respective RKHSs in order to satisfy such constraints. Thus, if we expand $\W=\K_x\B$ and $\H=\K_y\C$, where $\B\in\mathbb{R}^{N\times p}$ and $\C\in\mathbb{R}^{L\times p}$ are coefficient matrices, (\ref{eq:factker}) becomes
\begin{align}\label{eq:factkerw}
    \{\hat{\B},\hat{\C}\}=\argmin_{\substack{\B\in\mathbb{R}^{N\times p}\\\C\in\mathbb{R}^{L\times p}}} & \frob{P_\Omega(\M-\K_x\B\C^T\K_y)}  \\
    &+ \mu\trace{\B^T\K_x\B} + \mu\trace{\C^T\K_y\C}.\nonumber
\end{align}
Nevertheless, with nonsingular kernel matrices, $\B$ and $\C$ can be found by solving \eqref{eq:factker} and substituting $\hat{\B}=\K_x^{-1}\hat{\W}$ and $\hat{\C}=\K_y^{-1}\hat{\H}$~\cite{bazerque2013}.


Alternating minimization schemes that solve the bi-linear MC formulation \eqref{eq:mclag} tends to the solution to the convex problem \eqref{eq:nuclmc} in the limit~\cite{jain2013low}, thus convergence to the global optimum is not guaranteed unless the number of iterations is infinite. Since algorithms for kernel-based MC~\cite{bazerque2013} solving \eqref{eq:factker} rely on such alternating minimization schemes, they lack convergence guarantees given finite iterations as well. In addition to that, their computational cost scales with the size of $\F$. On the other hand, online implementations have a lower cost~\cite{zhou2012}, but only guarantee convergence to a stationary point~\cite{mardani2015}. In the ensuing section we develop a convex kernel-based reformulation of MCEX that enables a closed-form solver which purely exploits the extrapolation facilitated by the kernels. By casting aside the low-rank constraints, the computational complexity of our solver scales only with the number of observations while, according to our numerical tests, providing better performance. Moreover, we derive an online implementation that can be seamlessly  extended to distributed operation.

\section{Kronecker kernel MCEX}
\label{sec:kmr}

In the previous section, we viewed the columns and rows of $\F$ as functions evaluated at the points of the input spaces $\mathcal{X}$ and $\mathcal{Y}$ in order to unify the state-of-the-art on MC using RKHSs. Instead, we now postulate entries of $\F$ as the output of a function lying on an RKHS evaluated at a tuple $(x_i,y_i) \in \mathcal{X} \times \mathcal{Y}$. Given the spaces $\mathcal{X}$ and $\mathcal{Y}$, consider the space $\mathcal{Z}:=\mathcal{X}\times\mathcal{Y}$ with cardinality $|\mathcal{Z}|=NL$ along with the two-dimensional function $v:\mathcal{Z}\rightarrow\mathbb{R}$ as $v(x_i,y_j)=f_j(x_i)$, which belongs to the RKHS
\begin{equation}\label{eq:hilbz}
  \mathcal{H}_z\!:=\!\{v\!:\!v(x_i,\!y_j) \!=\! \sum^N_{n=1}\!\sum^L_{l=1} \!\gamma_{n,l} \kapz((x_i,\!y_j),\!(x_n,\!y_l)), \: \:\; \gamma_{n,l} \!\in \!\mathbb{R}\}
\end{equation}
with $\kapz:\mathcal{Z}\times\mathcal{Z}\rightarrow \mathbb{R}$. While one may choose any kernel to span $\mathcal{H}_z$, we will construct one adhering to our bilinear factorization $\F=\W\H^T$ whose $(i,j)^\text{th}$ entry yields
\begin{equation}\label{eq:zhw}
\F_{ij}=v(x_i,y_j) = \sum^p_{m=1}w_m(x_i)h_m(y_j)
\end{equation}
with $w_m$ and $h_m$ functions capturing $m^{\text{th}}$ column vector of $\W$ and $\H$ as in (\ref{eq:vecw}) and (\ref{eq:vech}). Since $ w\in\mathcal{H}_x$ and $ h\in\mathcal{H}_y$, we can write  $w_m(x_i) = \sum^N_{n=1} b_{n,m} \kapx(x_i,x_n)$ and $h_m(y_j) = \sum^L_{l=1} c_{l,m} \kapy(y_j,y_l)$, where $b_{n,m}$ and $c_{l,m}$ are the entries at $(n,m)$ and $(l,m)$ of the factor matrices $\B$ and $\C$ from~\eqref{eq:factkerw}, respectively. Therefore,~(\ref{eq:zhw}) can be rewritten as
\begin{align}
v(x_i,y_j) &= \sum^p_{m=1} \sum^N_{n=1} b_{n,m}\kapx(x_i,x_n) \sum^L_{l=1} c_{l,m}\kapy(y_j,y_l) \nonumber \\
&= \sum^N_{n=1}\sum^L_{l=1} \left(\sum^p_{m=1} b_{n,m} c_{l,m} \right) \kapx(x_i,x_n)\kapy(y_j,y_l) \nonumber \\
&=  \sum^N_{n=1}\sum^L_{l=1} \gamma_{n,l} \kapz((x_i,y_j),(x_n,y_l))\label{eq:zfunc}
\end{align}
where $\gamma_{n,l}=\sum^p_{m=1} b_{m,n} c_{m,l}$, and $\kapz((x_i,y_j),(x_n,y_l)) =\kapx(x_i,x_n)\kapy(y_j,y_l)$
since a product of kernels is itself a kernel~\cite{friedman2001elements}.
Using the latter,~\eqref{eq:zfunc} can be written compactly as
\begin{equation}\label{eq:zkermat}
    v(x_i,y_j) = \k_{i,j}^T\gam
\end{equation}
where $\gam:=[\gamma_{1,1},\gamma_{2,1},\ldots,\gamma_{N,1},\gamma_{1,2},\gamma_{2,2},\ldots,\gamma_{N,L}]^T$, and correspondingly,
\begin{align} \label{eq:kykz}
    \k_{i,j}=&[\kapx(x_i,x_1)\kapy(y_j,y_1), \ldots, \kapx(x_i,x_N)\kapy(y_j,y_1),\nonumber \\ &\kapx(x_i,x_1)\kapy(y_j,y_2), \ldots, \kapx(x_i,x_N)\kapy(y_j,y_L)]^T \nonumber \\
    =& (\K_y)_{:,j}\otimes(\K_x)_{:,i}
\end{align}
where a subscript $(:,j)$ denotes the $j^{\text{th}}$ column of a matrix, and we have used that $\K_x$ and $\K_y$ are symmetric matrices. In accordance with \eqref{eq:kykz}, the kernel matrix of $\mathcal{H}_z$ in~\eqref{eq:hilbz} is
\begin{equation}\label{eq:Kz}
\K_z=\K_y\otimes\K_x.
\end{equation}
Clearly, $\k_{i,j}$ in~\eqref{eq:kykz} can also be expressed as $\k_{i,j}=(\K_z)_{:,(j-1)N+i}$. This together with~\eqref{eq:zkermat} implies that 
\begin{align}
  \v =[&v(x_1,y_1),v(x_2,y_1),\ldots,v(x_N,y_1),v(x_1,y_2),\\&v(x_2,y_2),\ldots,v(x_N,y_N)]^T 
\end{align}
can be expressed in matrix-vector form as
\begin{equation}\label{eq:zmod}
\v = \K_z\gam
\end{equation}
or, equivalently, $\v=\text{vec}(\F)$. Note that entries of the kernel matrix are $(\K_z)_{i',j'} = \kapx(x_i,x_n)\kapy(y_j,y_l)$, where $n=j'\:\text{mod}\:N, i=i'\:\text{mod}\:N, l=\lceil {j'\over N}\rceil, \text{ and } j=\lceil {i'\over N}\rceil$.

Since the eigenvalues of $\K_z$ are the product of eigenvalues of $\K_y$ and $\K_x$, it follows that $\K_z$ is positive semidefinite and thus a valid kernel matrix. With the definition of the function $v$ and its vector form we have transformed the matrix of functions specifying $\F$ into a function that lies on the RKHS $\mathcal{H}_z$. Hence, we are ready to formulate MCEX as a kernel regression task for recovering $\v$ from the observed entries of $\m=\text{vec}(\M)$.

 Given  $ \{((x_i,y_j),m_{i,j})\}_{(i,j)\in\Omega}$ in $\mathcal{Z}\times\mathbb{R}$, our goal is to recover the function $v$ as
 \begin{equation}\label{eq:lossr1}
  \hat{v}=\argmin_{v\in\mathcal{H}_z} \sum_{(i,j)\in\Omega} (m_{i,j}-v(x_i,y_j))^2 + \mu ||v||^2_{\mathcal{H}_\mathcal{Z}}
\end{equation}
where $||v||^2_{\mathcal{H}_\mathcal{Z}}:=\gam^T\K_z\gam$.
 Define next $\e := \text{vec}(\E)$ and $\mb=\S\m$, where $\S$ is an $S\times NL$ binary sampling matrix also used to specify the sampled noise vector $\bar{\e}=\S\e$. With these definitions and (\ref{eq:zmod}), the model in (\ref{eq:matm}) becomes
\begin{equation}\label{eq:obsvec}
  \mb = \S\v + \S\e = \S\K_z\gam + \bar{\e}
\end{equation}
which can be solved to obtain
\begin{equation}\label{eq:mcker}
  \hgam = \argmin_{\gam\in\mathbb{R}^{NL}} \norm{\mb - \S\K_z\gam} + \mu\gam^T\K_z\gam
\end{equation}
in closed form
\begin{equation}\label{eq:krr}
  \hgam = (\S^T\S\K_z+\mu\I)^{-1}\S^T\mb.
\end{equation}
Since the size of $\K_z$ is $NL\times NL$, the inversion in (\ref{eq:krr}) can be very computationally intensive. To alleviate this, we will leverage the Representer Theorem (see~\cite{scholkopf2001generalized} for a formal proof), which allows us to reduce the number of degrees of freedom of the regression problem. In our setup, this theorem is as follows. 
\begin{theorem}
\noindent \textbf{Representer Theorem}. Given the set of input-observations pairs $\{(x_i,y_j),m_{i,j})\}_{(i,j)\in \Omega}$ in $\mathcal{Z}\times \mathbb{R}$ and the function $v$ as in (\ref{eq:zfunc}), the solution to 
\begin{equation}\label{eq:lossr}
  \argmin_{v\in\mathcal{H}_z} \sum_{(i,j)\in\Omega} (m_{i,j}-v(x_i,y_j))^2 + \mu ||v||^2_{\mathcal{H}_\mathcal{Z}}
\end{equation}
is an estimate $\hat{v}$ that satisfies
\begin{equation}\label{eq:zhat}
  \hat{v} = \sum_{(n,l)\in\Omega} {\tau}_{n,l} k_{z}((\cdot,\cdot),(x_n,y_l))
\end{equation}
for some coefficients $\tau_{n,l}\in\mathbb{R},$ $\forall(n,l)\in\Omega$.
\end{theorem}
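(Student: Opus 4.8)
The plan is to invoke the standard orthogonal-decomposition argument that underlies every Representer Theorem, specialized here to the finite index set $\Omega$. First I would introduce the finite-dimensional subspace $\mathcal{V} := \mathrm{span}\{k_z((\cdot,\cdot),(x_n,y_l)) : (n,l)\in\Omega\} \subseteq \mathcal{H}_z$ spanned by the kernel sections anchored at the sampled inputs. Since $\mathcal{H}_z$ is a Hilbert space and $\mathcal{V}$ is closed (being finite-dimensional), the projection theorem grants every candidate $v\in\mathcal{H}_z$ a unique orthogonal decomposition $v = v_\parallel + v_\perp$, with $v_\parallel\in\mathcal{V}$ and $v_\perp\in\mathcal{V}^\perp$. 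By the definition of $\mathcal{V}$ this means $v_\parallel = \sum_{(n,l)\in\Omega} \tau_{n,l}\, k_z((\cdot,\cdot),(x_n,y_l))$ for some coefficients $\tau_{n,l}\in\mathbb{R}$, while $\langle v_\perp, k_z((\cdot,\cdot),(x_n,y_l))\rangle_{\mathcal{H}_z} = 0$ for every $(n,l)\in\Omega$.

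The key step is then to show that the data-fit term in (\ref{eq:lossr}) sees only $v_\parallel$, whereas the regularizer can only penalize a nonzero $v_\perp$. For the former I would apply the reproducing property of $\mathcal{H}_z$, namely $v(x_i,y_j) = \langle v, k_z((\cdot,\cdot),(x_i,y_j))\rangle_{\mathcal{H}_z}$, at each sampled pair. Because $(i,j)\in\Omega$ forces $k_z((\cdot,\cdot),(x_i,y_j))\in\mathcal{V}$ and hence $\langle v_\perp, k_z((\cdot,\cdot),(x_i,y_j))\rangle_{\mathcal{H}_z}=0$, every evaluation collapses to $v(x_i,y_j) = \langle v_\parallel, k_z((\cdot,\cdot),(x_i,y_j))\rangle_{\mathcal{H}_z} = v_\parallel(x_i,y_j)$. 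Thus each residual $m_{i,j}-v(x_i,y_j)$ is independent of $v_\perp$, so the loss $\sum_{(i,j)\in\Omega}(m_{i,j}-v(x_i,y_j))^2$ depends on $v$ only through its projection $v_\parallel$.

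For the regularizer I would invoke the Pythagorean identity afforded by orthogonality, $\|v\|_{\mathcal{H}_z}^2 = \|v_\parallel\|_{\mathcal{H}_z}^2 + \|v_\perp\|_{\mathcal{H}_z}^2 \geq \|v_\parallel\|_{\mathcal{H}_z}^2$, with equality if and only if $v_\perp=0$. Combining the two facts, replacing any $v$ by $v_\parallel$ leaves the loss unchanged while never increasing $\mu\|v\|_{\mathcal{H}_z}^2$, and strictly decreasing it whenever $v_\perp\neq 0$. Hence no minimizer can carry an orthogonal component, so the optimal $\hat v$ must lie in $\mathcal{V}$ and therefore admits the expansion (\ref{eq:zhat}). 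I expect the only real subtlety, rather than a genuine obstacle, to lie in the bookkeeping of the reproducing property for the tensor-product kernel $\kapz((x_i,y_j),(x_n,y_l))=\kapx(x_i,x_n)\kapy(y_j,y_l)$, ensuring that its sections indeed reproduce point evaluations of $v$ in $\mathcal{H}_z$; once the Hilbert-space structure of $\mathcal{H}_z$ and this reproducing property are in place, the argument is a pure projection statement and uses nothing about the quadratic form of the loss beyond the fact that it depends on $v$ solely through the sampled evaluations.
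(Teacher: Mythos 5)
Your argument is correct: it is the classical orthogonal-decomposition proof of the Representer Theorem, and it is essentially the proof given in the reference the paper cites for this result (Sch\"olkopf et al.'s generalized representer theorem). The paper itself does not reproduce that argument; instead it immediately remarks that, in its finite-dimensional vectorized setting, the theorem ``boils down to'' applying the matrix inversion lemma to the closed-form ridge solution \eqref{eq:krr}, which rewrites $\hat{\gam}=(\S^T\S\K_z+\mu\I)^{-1}\S^T\mb$ as $\hat{\gam}=\S^T(\S\K_z\S^T+\mu\I)^{-1}\mb$ and thereby exhibits directly that $\hat{\gamma}_{n,l}=0$ for $(n,l)\notin\Omega$. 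The two routes trade off generality against constructiveness: your projection argument needs nothing about the squared loss (only that the fit term depends on $v$ through its sampled evaluations) and works verbatim in an infinite-dimensional RKHS, whereas the paper's MIL route is tied to the quadratic loss and the finite kernel matrix but hands over the explicit reduced-size closed form \eqref{eq:gamest} that the KKMCEX algorithm actually uses. One small point worth a sentence in a polished write-up: the paper only assumes $\K_x,\K_y$ positive \emph{semi}definite, so $\langle\cdot,\cdot\rangle_{\mathcal{H}_z}$ may be degenerate; the projection/Pythagoras step should be phrased on the associated quotient space (or one notes that a null-space component changes neither the loss nor the norm), after which your argument goes through unchanged.
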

Theorem 1 asserts that $\hat{\gam}$ in (\ref{eq:mcker}) satisfies 
$\hat{\gamma}_{n,l}=0 \; \forall \; (n,l) \notin \Omega$. Therefore, we only need to optimize $\{\gamma_{n,l} : (n,l)\in\Omega\}$ which correspond to the observed pairs. In fact, for our vector-based formulation, the Representer Theorem boils down to applying on~\eqref{eq:krr} the matrix inversion lemma (MIL), which asserts the following. 
    \begin{lemma}\label{lem:MIL} \textbf{MIL}~\cite{henderson1981}. Given matrices $\A,\U$ and $\V$ of conformal dimensions, with $\A$ invertible, it holds that
\begin{equation}\label{eq:mil}
   (\U\V+\A)^{-1}\U = \A^{-1}\U(\V\A^{-1}\U+\I).
\end{equation}
\end{lemma}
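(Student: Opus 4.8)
The plan is to prove the identity by the elementary ``push-through'' manipulation, reducing everything to an algebraic expansion that uses no inverses at all, and then finishing with a single cross-multiplication step. The reason this is the natural route is that the two sides of \eqref{eq:mil} differ only in \emph{where} the inverse is taken, so the whole content of the lemma is that a common quantity can be factored in two ways.

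First I would write down the purely algebraic identity, valid whenever $\A$ is invertible,
\begin{equation}
(\U\V+\A)\A^{-1}\U = \U + \U\V\A^{-1}\U = \U(\V\A^{-1}\U+\I),
\end{equation}
obtained by expanding the left-hand side using $\A\A^{-1}=\I$ and then factoring $\U$ out on the right. No inverse of $(\U\V+\A)$ or of $(\V\A^{-1}\U+\I)$ is invoked here, so this step holds unconditionally given only the invertibility of $\A$ assumed in the statement.

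Next, assuming the two square factors $(\U\V+\A)$ and $(\V\A^{-1}\U+\I)$ are nonsingular, I would left-multiply the displayed chain by $(\U\V+\A)^{-1}$ and right-multiply by $(\V\A^{-1}\U+\I)^{-1}$. On the left the outer factors cancel to leave $\A^{-1}\U(\V\A^{-1}\U+\I)^{-1}$, while on the right they cancel to leave $(\U\V+\A)^{-1}\U$, which is exactly the asserted push-through identity \eqref{eq:mil}.

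The step I expect to be the main obstacle is invertibility, since both factors must be nonsingular for the rearrangement to be legitimate, and a priori it is not obvious that invertibility of one is tied to that of the other. I would dispatch this by the Sylvester determinant identity $\det(\U\V+\A)=\det(\A)\det(\V\A^{-1}\U+\I)$, which (together with $\A$ invertible) shows the two factors are invertible simultaneously, so the single hypothesis on $\A$ suffices. In the application to \eqref{eq:krr} this is automatic: the ridge parameter $\mu>0$ renders the relevant factor $(\V\A^{-1}\U+\I)$ strictly positive definite, so no additional assumptions are required.
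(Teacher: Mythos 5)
Your push-through argument is correct, and it is the standard elementary proof of this identity. The paper itself offers no proof of the lemma---it is simply cited from Henderson and Searle---so there is no in-paper argument to compare against; your route is the expected one.

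What your derivation usefully exposes is that the lemma as printed in \eqref{eq:mil} is missing an inverse. Starting from the unconditional identity $(\U\V+\A)\A^{-1}\U=\U(\V\A^{-1}\U+\I)$ and peeling off the two square factors, what you obtain (and what is true) is $(\U\V+\A)^{-1}\U=\A^{-1}\U(\V\A^{-1}\U+\I)^{-1}$, whereas the displayed equation omits the exponent on the rightmost factor. The corrected version is also the one the paper actually applies: with $\A=\mu\I$, $\U=\S^T$ and $\V=\S\K_z$ (note the paper's stated choice $\V=\K_z\S^T$ gives $\U\V=\S^T\K_z\S^T$, which does not match \eqref{eq:krr} either), your identity yields $\hgam=\S^T(\S\K_z\S^T+\mu\I)^{-1}\mb$ as in \eqref{eq:gamest}, and likewise \eqref{eq:zreqzk}; neither would follow from \eqref{eq:mil} as literally written. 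Your treatment of the invertibility issue is also sound: the Sylvester determinant identity $\det(\U\V+\A)=\det(\A)\det(\V\A^{-1}\U+\I)$ shows the two square factors are singular or nonsingular together, and in the application $\V\A^{-1}\U+\I=\mu^{-1}\S\K_z\S^T+\I$ is positive definite for $\mu>0$ because $\S\K_z\S^T\succeq 0$, so no hypothesis beyond the invertibility of $\A$ is needed there.
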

With \eqref{eq:krr} $\A=\mu\I, \U=\S^T$ and $\V=\K_z\S^T$, application of \eqref{eq:mil} to ~\eqref{eq:krr} yields
\begin{equation}\label{eq:gamest}
  \hat{\gam} = \S^T(\S\K_z\S^T+\mu\I)^{-1}\mb.
\end{equation}
Subsequently, we reconstruct $\v$ as
\begin{equation}\label{eq:zKKMCEX}
  \hat{\v}_K = \K_z\S^T(\S\K_z\S^T+\mu\I)^{-1}\mb
\end{equation}
and we will henceforth refer to as the \textit{Kronecker kernel MCEX} (KKMCEX) estimate of $\v$. Regarding the computational cost incurred by \eqref{eq:gamest}, inversion costs $\mathcal{O}(S^3)$, since the size of the matrix to be inverted is reduced from $NL$ to $S$. Clearly, there is no need to compute $\K_z=\K_y\otimes\K_x$. As $\S$ has binary entries, $\S\K_z\S^T$ is just a selection of $S^2$ entries in $\K_z$; and, given that $\kapz((x_i,y_j),(x_n,y_l)) =\kapx(x_i,x_n)\kapy(y_j,y_l)$, it is obtained at cost $\mathcal{O}(S^2)$. Overall, the cost incurred by \eqref{eq:gamest} is $\mathcal{O}(S^3)$. Compared to the MC approach in \eqref{eq:factker}, the KKMCEX method is easier to implement since it only involves a matrix inversion. Moreover, since it admits a closed-form solution, it facilitates deriving bounds on the estimation error of $\hat{\v}_K$.

\noindent \textbf{Remark 1}. Matrices built via the Kronecker product have been used in regression for different purposes. Related to MC, \cite{rao2015collaborative} leverages Kronecker product structures to efficiently solve the Sylvester equations that arise in alternating minimization iterations to find $\{\hat{\W},\hat{\H}\}$ in \eqref{eq:factker}. On the other hand, \cite{pahikkala2014two,stock2018comparative} propose a Kronecker kernel ridge regression method that can be used to extrapolate missing entries in a matrix. However, the methods in~\cite{pahikkala2014two,stock2018comparative} assume a complete training set and Kronecker structure for the regression matrix; this implies that the observed entries in $\bm{M}$ can be permuted to form a full submatrix. In our formulation, we introduce $\S$ which encompasses any sampling pattern in $\Omega$. Thus, the properties of the Kronecker product used in~\cite{rao2015collaborative,pahikkala2014two,stock2018comparative} cannot be applied to solve~$\eqref{eq:zKKMCEX}$ since $\S\K_z\S^T$ is not necessarily the Kronecker product of two smaller matrices.

\noindent \textbf{Remark 2}. The KKMCEX solution in~\eqref{eq:zKKMCEX}, differs from that obtained as the solution of~(\ref{eq:factkerw}). On the one hand, the loss in~\eqref{eq:mcker} can be derived from the factorization-based one by using the Kronecker product kernel $\K_y\otimes\K_x$ and $\gam=\text{vec}(\B\C^T)$ to arrive at
\begin{align}
& \frob{P_\Omega(\M - \K_x\B\C^T\K_y)} \nonumber\\&  \:\: = \norm{\mb - \S(\K_y\otimes\K_x)\text{vec}(\B\C^T)}.
\end{align}
One difference between the two loss functions is that (\ref{eq:mcker}) does not explicitly limit the rank of the recovered matrix $\hat{\F}=\text{unvec}({\hat{\v}_R})$ since it has $NL$ degrees of freedom through $\hat{\gam}$, while in (\ref{eq:factkerw}) the rank of $\hat{\F}$ cannot exceed $p$ since $\B$ and $\C$ are of rank $p$ at most.
In fact, the low-rank property is indirectly promoted in~(\ref{eq:mcker}) through the kernel matrices. Since $\text{rank}(\F)\leq\min(\text{rank}(\K_x),\text{rank}(\K_y))$, we can limit the rank of $\hat{\F}$ by selecting rank deficient kernels.
On the other hand, the regularization terms in (\ref{eq:factkerw}) and (\ref{eq:mcker}) play a different role in each formulation. The regularization in (\ref{eq:factkerw}) promotes smoothness on the columns of the estimated factor matrices $\{\hat{\W},\hat{\H}\}$; or, in other words, similarity between the rows of $\{\hat{\W},\hat{\H}\}$ as measured by $\kapx$ and $\kapy$. On the contrary, the regularization in (\ref{eq:mcker}) promotes smoothness on $\hat{\v}$, which is tantamount to promoting similarity between the entries of $\hat{\F}$ in accordance with $\kapz$.

\subsection{KKMCEX error analysis}

In order to assess the performance of KKMCEX we will rely on the mean-square error
\begin{equation}\label{eq:risk}
  MSE := \ex{||\v - \hat{\v}_K||^2_2}
\end{equation}
where $\ex{\cdot}$ denotes the expectation with respect to $\e$. Before we proceed, we will outline Nyström's approximation.

\noindent \textbf{Definition 1}. Given a kernel matrix $\K$ and a binary sampling matrix $\S$ of appropriate dimensions,  the Nyström approximation~\cite{drineas2005nystrom} of $\K$ is $\T = \K\S^T(\S\K\S^T)^{\dagger}\S\K$, and the regularized Nyström approximation is
\begin{equation}\label{eq:nystreg}
  \tilde{\T} =  \K\S^T(\S\K\S^T + \mu\I)^{-1}\S\K.
\end{equation}
Nyström's approximation is employed to reduce the complexity of standard kernel regression problems such as the one in (\ref{eq:rralp}). Instead of $\K$, the low-rank approximation $\T$ is used to reduce the cost of inverting large-size matrices using the MIL~\cite{alaoui2015fast}. While it is known that the best low-rank approximation to a matrix is obtained from its top eigenvectors, Nyström's approximation is cheaper. Using Def. 1, 
the following lemma provides the bias and variance of the KKMCEX estimator in (\ref{eq:zKKMCEX}):
\begin{lemma}\label{th:biasvar} Given the kernel matrix $\K_z$ and its regularized Nyström approximation $\tilde{\T}_z$ with $\mu>0$, the MSE of the KKMCEX estimator is
    \begin{align}\label{eq:lem1}
        \text{MSE} &= \norm{(\K_z-\tilde{\T}_z)\gam}
        + \ex{{1\over\mu^2}\norm{(\K_z - \tilde{\T}_z)\S^T\bar{\e}}}
    \end{align}
    where the first term accounts for the bias and the second term accounts for the variance.
\end{lemma}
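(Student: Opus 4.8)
The plan is to substitute the observation model \eqref{eq:obsvec} into the closed-form estimator \eqref{eq:zKKMCEX}, split the resulting error into a deterministic (bias) part and a zero-mean noise (variance) part, and then identify these two pieces with the two terms in \eqref{eq:lem1}. First I would plug $\mb = \S\K_z\gam + \bar{\e}$ into \eqref{eq:zKKMCEX} and recognize the first summand as the regularized Nystr\"om approximation $\tilde{\T}_z = \K_z\S^T(\S\K_z\S^T+\mu\I)^{-1}\S\K_z$ of \eqref{eq:nystreg}, giving
\begin{equation}
\hat{\v}_K = \tilde{\T}_z\gam + \K_z\S^T(\S\K_z\S^T+\mu\I)^{-1}\bar{\e}.
\end{equation}
Since $\v=\K_z\gam$ by \eqref{eq:zmod}, the reconstruction error is a deterministic vector minus a noise-driven vector:
\begin{equation}
\v - \hat{\v}_K = (\K_z-\tilde{\T}_z)\gam - \K_z\S^T(\S\K_z\S^T+\mu\I)^{-1}\bar{\e}.
\end{equation}

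Next I would expand $\text{MSE}=\ex{\norm{\v-\hat{\v}_K}}$ as the squared deterministic term, minus twice a cross term, plus the expected squared noise term. The cross term is linear in $\bar{\e}=\S\e$, so under the standing assumption of zero-mean noise it vanishes upon applying $\ex{\cdot}$. This leaves exactly a bias--variance split in which the deterministic contribution $\norm{(\K_z-\tilde{\T}_z)\gam}$ is the first, bias term of \eqref{eq:lem1}, and $\ex{\norm{\K_z\S^T(\S\K_z\S^T+\mu\I)^{-1}\bar{\e}}}$ is the variance.

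The one nonroutine step, and the crux of the lemma, is to rewrite the noise-propagation matrix in terms of the same difference $\K_z-\tilde{\T}_z$ that governs the bias. Writing $\G:=\S\K_z\S^T$ and using the elementary identity $\I-(\G+\mu\I)^{-1}\G = \mu(\G+\mu\I)^{-1}$, I would obtain
\begin{equation}
(\K_z-\tilde{\T}_z)\S^T = \K_z\S^T\big[\I-(\G+\mu\I)^{-1}\G\big] = \mu\,\K_z\S^T(\G+\mu\I)^{-1}.
\end{equation}
Hence $\K_z\S^T(\S\K_z\S^T+\mu\I)^{-1}\bar{\e} = \tfrac{1}{\mu}(\K_z-\tilde{\T}_z)\S^T\bar{\e}$, and inserting this into the variance term gives precisely $\ex{\tfrac{1}{\mu^2}\norm{(\K_z-\tilde{\T}_z)\S^T\bar{\e}}}$, matching \eqref{eq:lem1}. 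I expect this algebraic rewriting to be the main obstacle, since the rest is a direct bias--variance decomposition; its payoff is that both the bias and the variance are expressed through the single matrix $\K_z-\tilde{\T}_z$, which is convenient for the error bounds derived afterwards.
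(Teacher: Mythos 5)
Your proposal is correct and follows essentially the same route as the paper: the same bias--variance split after substituting the observation model, with the bias term identified via $\v=\K_z\gam$ and the variance term rewritten through $\K_z-\tilde{\T}_z$. Your identity $(\K_z-\tilde{\T}_z)\S^T=\mu\,\K_z\S^T(\S\K_z\S^T+\mu\I)^{-1}$ is exactly the paper's insertion of $\tfrac{1}{\mu}(\mu\I+\S\K_z\S^T-\S\K_z\S^T)$, just run in the opposite direction.
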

Lemma~\ref{th:biasvar} shows that the MSE of the KKMCEX can be expressed in terms of $\tilde{\T}_z$; see proof in the Appendix. Knowing that the 2-norm satisfies $\norm{\A}\leq\frob{\A}$, we have
\begin{align}
    &\norm{(\K_z-\tilde{\T}_z)\gam}+ \ex{{1\over\mu^2}\norm{(\K_z - \tilde{\T}_z)\S^T\bar{\e}}}
    \nonumber\\&\:\:\leq \frob{(\K_z-\tilde{\T}_z)}\left(\norm{\gam} +\ex{{1\over\mu^2}\norm{\S^T\bar{\e}}}\right).
\end{align}
Consequently, the upper bound on the MSE is proportional to the approximation error of $\tilde{\T}_z$ to $\K_z$. This suggests selecting $\{m_{i,j}\}_{(i,j)\in\Omega}$ so that this approximation error is minimized; see also~\cite{alaoui2015fast} where $\Omega$ is chosen according to the so-called leverage scores of $\K_z$ in order to minimize the regression error. The next theorem uses Lemma 1 to upper bound the MSE in \eqref{eq:lem1}; see the Appendix for its proof.
\begin{theorem}\label{th:mse}
    Let $\sigma_{NL}$ be the maximum eigenvalue of a nonsingular $\K_z$, and $\tilde{\gam}:=\L^T\gam$, where $\L$ is the eigenvector matrix of $\K_z-\tilde{\T}_z$. If $\e$ is a zero-mean vector of iid Gaussian random variables with covariance matrix $\nu^2\I$, the MSE of the KKMCEX estimator  is bounded as
    \begin{equation}\label{eq:msebound}
MSE \leq
\frac{\mu^2\sigma_{NL}^2}{(\sigma_{NL}+\mu)^2}\sum _{i=1}^{S}\tilde{\gam_i}^2 + \sigma^2_{NL}\sum_{i=S+1}^{NL}\tilde{\gam}_i^2+\frac{S\nu^2 \sigma_{NL}^2}{\mu^2}.
\end{equation}
\end{theorem}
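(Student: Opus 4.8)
The plan is to bound separately the two summands of the MSE furnished by Lemma~\ref{th:biasvar}, namely the bias $\norm{(\K_z-\tilde{\T}_z)\gam}$ and the variance $\ex{\mu^{-2}\norm{(\K_z-\tilde{\T}_z)\S^T\bar{\e}}}$. Writing $\bm\Delta:=\K_z-\tilde{\T}_z$ and using its eigendecomposition $\bm\Delta=\L\bm D\L^T$ with $\bm D=\text{diag}(d_1,\dots,d_{NL})$ and $d_i=\lambda_i(\bm\Delta)$ in ascending order, $\L$ is exactly the eigenvector matrix defining $\tilde{\gam}=\L^T\gam$. Since $\bm\Delta$ is symmetric, the bias becomes $\norm{\bm\Delta\gam}=\gam^T\bm\Delta^2\gam=\tilde{\gam}^T\bm D^2\tilde{\gam}=\sum_{i=1}^{NL}d_i^2\tilde{\gam}_i^2$, so everything reduces to controlling the eigenvalues $d_i$ and, separately, a trace involving $\bm\Delta^2$.

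For the eigenvalues I would first establish the coarse two-sided bound $0\preceq\bm\Delta\preceq\K_z\preceq\sigma_{NL}\I$. The right inclusion uses that the regularized Nystr\"om approximation is dominated by the kernel matrix, $\tilde{\T}_z\preceq\K_z$ (seen by writing $\bm\Delta=\K_z^{1/2}(\I-\K_z^{1/2}\S^T(\S\K_z\S^T+\mu\I)^{-1}\S\K_z^{1/2})\K_z^{1/2}$ and checking the middle factor is PSD), while $\K_z\preceq\sigma_{NL}\I$ holds because $\sigma_{NL}=\lambda_{\max}(\K_z)$. This already gives $0\le d_i\le\sigma_{NL}$, hence $d_i^2\le\sigma_{NL}^2$ for every $i$, which is what the tail indices $i=S+1,\dots,NL$ require.

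The crux is the sharper estimate $d_i\le\mu\sigma_{NL}/(\sigma_{NL}+\mu)$ for the $S$ smallest eigenvalues. Here I would apply the Courant--Fischer min--max principle on the $S$-dimensional coordinate subspace $\mathcal U=\text{Col}(\S^T)$ spanned by the sampled positions. Since $\S\S^T=\I$, for $\x=\S^T\w$ one has $\norms{\x}=\norms{\w}$, so the Rayleigh quotient of $\bm\Delta$ on $\mathcal U$ coincides with that of the compression $\S\bm\Delta\S^T$. A short computation using $\S\tilde{\T}_z\S^T=\Q(\Q+\mu\I)^{-1}\Q$ with $\Q:=\S\K_z\S^T$ collapses this compression to the closed form $\S\bm\Delta\S^T=\mu\,\Q(\Q+\mu\I)^{-1}$. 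Because $\Q$ is a principal submatrix of $\K_z$, Cauchy interlacing gives $\lambda_{\max}(\Q)\le\sigma_{NL}$, and since $x\mapsto\mu x/(x+\mu)$ is increasing we obtain $\S\bm\Delta\S^T\preceq\tfrac{\mu\sigma_{NL}}{\sigma_{NL}+\mu}\I$. Courant--Fischer then yields $d_S=\lambda_S(\bm\Delta)\le\mu\sigma_{NL}/(\sigma_{NL}+\mu)$, hence the same bound for all $d_i$ with $i\le S$. Substituting the two eigenvalue regimes into $\sum_i d_i^2\tilde{\gam}_i^2$ produces the first two terms of \eqref{eq:msebound}. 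I expect this subspace/interlacing step to be the main obstacle, since the naive ``active-subspace'' choice couples numerator and denominator through $\K_z^{1/2}$ and fails to give the clean constant; choosing the coordinate subspace $\text{Col}(\S^T)$ is precisely what makes the compression exact and lets interlacing deliver $\mu\sigma_{NL}/(\sigma_{NL}+\mu)$.

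Finally, for the variance I would use $\bar{\e}=\S\e$ with $\text{Cov}(\bar{\e})=\nu^2\I$ to write $\ex{\norm{\bm\Delta\S^T\bar{\e}}}=\nu^2\tr{\S\bm\Delta^2\S^T}$. Invoking $\bm\Delta^2\preceq\sigma_{NL}^2\I$ together with $\S\S^T=\I$ gives $\S\bm\Delta^2\S^T\preceq\sigma_{NL}^2\I$ on $\Rbb^S$, so its trace is at most $S\sigma_{NL}^2$; dividing by $\mu^2$ delivers the last term $S\nu^2\sigma_{NL}^2/\mu^2$ and completes the bound.
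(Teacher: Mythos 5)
Your proposal is correct and reaches exactly the bound in \eqref{eq:msebound}, but the route to the central spectral estimate differs from the paper's. Both arguments reduce the theorem to the same two-regime eigenvalue bound on $\K_z-\tilde{\T}_z$ (the $S$ smallest eigenvalues bounded by $\mu\sigma_{NL}/(\sigma_{NL}+\mu)$, the remaining $NL-S$ by $\sigma_{NL}$) and then assemble the bias as $\tilde{\gam}^T\bm\Lambda^2\tilde{\gam}$ and the variance as $(\nu^2/\mu^2)\trace{(\K_z-\tilde{\T}_z)^2\S^T\S}\leq S\nu^2\sigma_{NL}^2/\mu^2$; that part is essentially identical. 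Where you diverge is in proving the eigenvalue bound itself. The paper applies the MIL to rewrite $\K_z-\tilde{\T}_z$ as $\mu\Q_z\Sig_z^{1/2}(\Sig_z+\mu\I)^{-1/2}(\I-\P)^{-1}(\Sig_z+\mu\I)^{-1/2}\Sig_z^{1/2}\Q_z^T$, bounds the eigenvalues of $\P$ via the product-eigenvalue inequality of its Lemma~\ref{lem:sim}, and inverts the resulting Loewner bound. You instead compress onto the sampled coordinates: the exact identity $\S(\K_z-\tilde{\T}_z)\S^T=\mu\,\Q(\Q+\mu\I)^{-1}$ with $\Q=\S\K_z\S^T$, combined with Cauchy interlacing ($\lambda_{\max}(\Q)\leq\sigma_{NL}$) and monotonicity of $x\mapsto\mu x/(x+\mu)$, feeds directly into Courant--Fischer on the $S$-dimensional subspace $\text{Col}(\S^T)$ to give $\lambda_S(\K_z-\tilde{\T}_z)\leq\mu\sigma_{NL}/(\sigma_{NL}+\mu)$; the coarse bound $0\preceq\K_z-\tilde{\T}_z\preceq\K_z\preceq\sigma_{NL}\I$ handles the tail and the variance. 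Your version is more elementary (no MIL gymnastics, no auxiliary product-eigenvalue lemma) and makes the role of the sampling set transparent --- the sharp constant comes precisely from testing against the sampled coordinate subspace --- while the paper's derivation yields the intermediate closed form \eqref{eq:kzp}, which it reuses to motivate the Nystr\"om-approximation interpretation of the error. Both are valid; no gap.
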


Considering the right-hand side of (\ref{eq:msebound}), the first two terms correspond to the bias, while the last term is related to the variance. We observe that when $\M$ is fully observed, that is, $S=NL$, the bias can be made arbitrarily small  by having $\mu\rightarrow 0$. It is also of interest to assess how the MSE bound behaves as $S$ increases. Considering $\mu=S\mu'$ and fixed values in $(0,\infty)$ for $\mu'$, $||\tilde{\gam}_i||^2$ and $\sigma_{NL}$\footnote{Note that $||\tilde{\gam}_i||^2$ and $\sigma_{NL}$ depend on the selected kernel $\K_z$ and matrix $\F$, and do not depend on $\S$.}, the bias term reduces to
    \begin{align}\label{eq:biasboundtext} 
        \frac{S^2\mu'^2\sigma_{NL}^2}{(\sigma_{NL}+S\mu')^2}\sum _{i=1}^{S}\tilde{\gam_i}^2 + \sigma^2_{NL}\sum_{i=S+1}^{NL}\tilde{\gam}_i^2\;.
     \end{align}
We observe in \eqref{eq:biasboundtext} that as $S$ increases, terms move from the second summation to the first. Therefore, whether the bias term grows or diminishes depends on the multiplication factors in front of the two summations. Since  $\frac{S^2\mu'^2}{(\sigma_{NL}+S\mu')^2} \leq 1$ the bias term in \eqref{eq:biasboundtext} decreases with $S$. On the other hand, the variance term becomes $\frac{\nu^2 \sigma_{NL}^2}{S\mu'^2}$ and decays with $S$ as well. As a result, the MSE bound in Theorem~\ref{th:mse} decays up until $S=NL$.

\section{Ridge regression MCEX}
\label{sec:RRMCEX}

Although the KKMCEX algorithm is fast when $S$ is small, the size of the matrix to be inverted in (\ref{eq:gamest}) grows with $S$, hence increasing the computational cost.  Available approaches to reducing the computational cost of kernel regression methods are centered around the idea of approximating the kernel matrix. For instance,~\cite{alaoui2015fast} uses Nyström's approximation, that our performance analysis in Section IV was based on, whereas~\cite{yang2015randomized} relies on a sketch of $\K_z$ formed by a subset of its columns, hence reducing the number of regression coefficients; see also~\cite{avron2017faster}, where the kernel function is approximated by the inner product of random finite-dimensional feature maps, which also speeds up the matrix inversion. In this section, we reformulate the KKMCEX of Section~\ref{sec:kmr} to incorporate a low-rank approximation of $\K_z$ in order to obtain a reduced complexity estimate for $\bm \v$. Moreover, we also develop an online method based on this reformulation.

Recall from Eq. (\ref{eq:kerdot}) that a kernel can be viewed as the inner product of vectors mapped to a feature space $\mathcal{F}_z$, namely $\kapz((x_i,y_j),(x_n,y_l)) = \langle \phi_z(x_i,y_j),\phi_z(x_n,y_l)\rangle_{\mathcal{F}_z}$. Let $\tilde{\phi}_z:\mathcal{X}\times\mathcal{Y}\rightarrow\mathbb{R}^d$ be a feature map  approximating $\kappa_z$ so that 
    \begin{equation}\label{eq:kzfeatapprox}
\kapz((x_i,y_j),(x_n,y_l)) \simeq \langle \tilde{\phi}_z(x_i,y_j),\tilde{\phi}_z(x_n,y_l)\rangle.
    \end{equation}
Then, we define the $NL\times d$ feature matrix $\tbphi_z:=[\tilde{\phi}_z(x_1,y_1),\allowbreak\tilde{\phi}_z(x_2,y_1),\ldots,\tilde{\phi}_z(x_N,y_L)]^T$ and form $\tK_z=\tbphi_z\tbphi^T_z$. Note that $\tK_z$ is a rank-$d$ approximation of $\K_z$, and that the equality $\K_z=\tK_z$ is only feasible when $\text{rank}(\K_z)\leq d$. Consider $\tbphi_x=[\tilde{\phi}_x(x_1),\ldots,\tilde{\phi}_x(x_N)]$ and $\tbphi_y=[\tilde{\phi}_y(y_1),\ldots,\tilde{\phi}_y(y_L)]$, where $\tilde{\phi}_x:\mathcal{X}\rightarrow \mathbb{R}^{d_x}$ and $\tilde{\phi}_y:\mathcal{Y}\rightarrow \mathbb{R}^{d_y}$, as the feature matrices forming low-rank approximations to $\K_x$ and $\K_y$, respectively. Since  $\K_z=\K_y\otimes\K_x$ in KKMCEX, a prudent choice is $\tbphi_z=\tbphi_y\otimes \tbphi_x$. In the next section we will present means of constructing $\{\tbphi_x,\tbphi_y,\tbphi_z\}$ maps.

Since $\tK_z$ is a valid kernel matrix, upon replacing $\K_z$ in \eqref{eq:obsvec} with $\tK_z$, the observation model reduces to

\begin{equation}
    \bar{\m} = \S\tbphi_z\tbphi_z^T\bm\gam + \tilde{\e},
\end{equation}
where $\tilde{\e} = \bar{\e} + \S(\K_z-\tK_z)\gam$. With this model, the weights in (\ref{eq:mcker}) are obtained as
\begin{equation}\label{eq:mcfeat}
  \bm\hat{\gam} = \argmin_{\gam\in \mathbb{R}^{NL}} \norm{\bar{\m} - \S\tbphi_z\tbphi^T_z\gam} + \mu\gam^T\tbphi_z\tbphi^T_z\gam.
\end{equation}
Letting $\bxi:=\tbphi_z^T\bm\gamma$ and substituting into (\ref{eq:mcfeat}), we arrive at
\begin{equation}\label{eq:mcridge}
   \hxi=\argmin_{\bxi\in \mathbb{R}^d} \norm{\bar{\m} - \S\tbphi_z\bxi} + \mu\norm{\bxi}
\end{equation}
which admits the closed-form solution
\begin{equation}\label{eq:zeta}
  \hat{\bxi} = (\tbphi_z^T\S^T\S\tbphi_z+\mu\I)^{-1}\tbphi_z^T \S^T\bar{\m}.
\end{equation}
Using $\hat{\bxi}$, we obtain $\hat{\v}_{R}=\tbphi_z\hat{\bxi}$ as the \textit{ridge regression MCEX} (RRMCEX) estimate. Using the MIL~\eqref{eq:mil} on \eqref{eq:zeta}, it follows that 
\begin{align}\label{eq:zreqzk}
\hat{\bxi}=\tbphi_z^T \S^T(\S\tbphi_z\tbphi_z^T\S^T+\mu\I)^{-1}\bar{\m} 
\end{align}
and thus, 
\begin{equation}\label{eq:zreqzk2}
\hat{\v}_R=\tbphi_z\hat{\bxi}=\tK_z \S^T(\S\tK_z^T\S^T+\mu\I)^{-1}\bar{\m}.
\end{equation}
Therefore, \eqref{eq:zreqzk2} shows that $\hat{\v}_R$ is equivalent to the KKMCEX solution $\hat{\v}_K$ in \eqref{eq:zKKMCEX} after replacing $\K_z$ by its low-rank approximation $\tK_z$. For error-free approximation, $\K_z=\tbphi_z\tbphi_z^T$, while $\hat{\bxi}$ in \eqref{eq:zeta} can be viewed as the primal solution to the optimization problem in~\eqref{eq:mcridge}, and $\hat{\gam}$ in \eqref{eq:gamest} as its dual~\cite{shawe}. Still, obtaining $\hat{\bxi}$ requires multiplying two $d\times S$ matrices and inverting a $d\times d$ matrix, which incurs computational cost $\mathcal{O}(d^2S)$ when $S\geq d$, and  $\S\tbphi_z$ is obtained at cost $\mathcal{O}(dS)$. Thus, the cost of RRMCEX grows linearly with $S$ in contrast to KKMCEX that increases with $S^3$. 

By choosing an appropriate feature map so that $d\ll S$, it is possible to control the computational cost of calculating $\hat{\bxi}$. However, reduced computational cost by selecting a small $d$ might come at the price of an approximation error to $\K_z$, which correspondingly increases the estimation error of $\hat{\v}_R$. The selection of a feature matrix to minimize this error and further elaboration on the computational cost are given in Section \ref{sec:choosing}.

\subsection{Online RRMCEX}
\label{sec:oRRMCEX}
Online methods learn a model by processing one datum at a time. An online algorithm often results when the objective can be separated into several subfunctions, each depending on one or multiple data. In the context of MC, online implementation updates $\hat{\F}$ every time a new entry $\M_{i,j}$ becomes available. If we were to solve~\eqref{eq:gamest} each time a new observation was becoming available, inverting an $S\times S$ matrix per iteration would result in an overall prohibitively high computational cost. Still, the cost of obtaining an updated solution per observation can stay manageable using online kernel regression solvers that fall into three categories~\cite{van2014online}: dictionary learning, recursive regression and stochastic gradient descent based. Akin to~\cite{lu2016large,sheikholeslami2018}, we will pursue here the SGD.

Consider rewriting~\eqref{eq:mcridge} entrywise as
\begin{equation}\label{eq:mcrr}
    \hat{\bxi}=\argmin_{\bxi\in\mathbb{R}^d} \sum_{(i,j)\in \Omega}\left[m_{i,j} - \tilde{\phi}_z^T(x_i,y_j)\bxi\right]^2 + \mu\norm{\bxi}.
\end{equation}
With $n$ denoting each scalar observation, SGD iterations form a sequence of estimates
\begin{equation}\label{eq:sgd}
    \hat{\bxi}^n = \hat{\bxi}^{n-1} - t_n\left[\tilde{\phi}_z(x_i,y_j)(\tilde{\phi}^T_z(x_i,y_j)\hat{\bxi}^{n-1} - m_{i,j})+\mu\hat{\bxi}^{n-1}\right]
\end{equation}
where $t_n$ is the step size, $n=1,\ldots,S$ and the tuple $(i,j)$ denotes the indices of the entry revealed at iteration $n$. With properly selecting $t_n$, the sequence $\hat{\bxi}^n$ will converge to~\eqref{eq:mcrr} at per iteration cost $\mathcal{O}(d)$~\cite{bottou2012stochastic}. Apart from updating all entries in the matrix, ~\eqref{eq:sgd} can also afford a simple distributed implementation using e.g., the algorithms in~\cite{schizas}.

\noindent \textbf{Remark 3}.
Online algorithms for MC can be designed to solve the factorization-based formulation from (\ref{eq:mclag}) rewritten as
\begin{align}
\argmin_{\substack{\W\in\mathbb{R}^{N\times p}\\\H\in\mathbb{R}^{N\times p}}} \!\sum_{(i,j)\in\Omega} \!\left(\!(m_{i,j} \!-\! \w_i^T\h_j)^2 \!+\! {\mu \over |\Omega^w_i|}\norm{\w_i} \!+\! {\mu\over |\Omega^h_j|}\norm{\h_j}\!\right)
\end{align}
where $\w_i^T$ and $\h_j^T$ denote the $i^{\text{th}}$ and $j^{\text{th}}$ rows of $\H$ and $\W$ respectively, $\Omega^w_i=\{j\::\: (i,j) \in \Omega \}$, and $\Omega^h_j=\{i\::\: (i,j) \in \Omega \}$.
When $m_{i,j}$ becomes available, algorithms such as SGD and online ALS update the rows $\{\w_i^T$, $\h_j^T\}$ of the coefficient matrices.  This procedure can also be applied to the kernel MCEX formulation in~\eqref{eq:factker}, that solves for $\W$ and $\H$, although the rows $\{\w_i^T$, $\h_j^T\}$ cannot be updated independently due to the involvement of the kernel matrices~\cite{zhou2012}. Then, all entries in the $i^\text{th}$ row and $j^\text{th}$ column of $\hat{\F}$ are also updated per iteration, as opposed to our method which updates the whole matrix.
\section{Choosing the kernel matrices}
\label{sec:choosing}
In this section, we provide pointers on how to build matrices $\K_z$ for KKMCEX and $\tbphi_z$ for RRMCEX when prior information about either the matrix $\F$, or the input spaces $\mathcal{X}$ and $\mathcal{Y}$, is available.

\subsection{Kernels based on the graph Laplacian}
\label{sec:gsmc}
Suppose that the columns and rows of $\F$ lie on a graph, that is, each entry of a column or row vector is associated with a node on a graph that encodes the interdependencies with entries in the same vector. Specifically, we define an undirected weighted graph $\mathcal{G}_x = (\mathcal{X}, \mathcal{E}_x, \A_x)$ for the columns of $\F$, where $\mathcal{X}$ is the set of vertices with $|\mathcal{X}|=N$, $\mathcal{E}_x \subseteq \mathcal{X}\times\mathcal{X}$ is the set of edges connecting the vertices, and $\A_x \in \mathbb{R}^{N\times N}$ is a weighted adjacency matrix. Then, functions $\{f_l: \mathcal{X} \rightarrow \mathbb{R}\}^{L}_{l=1}$ are what is recently referred to as a graph signal~\cite{shuman2013emerging}, that is,  a map from the set $\mathcal{X}$ of vertices into the set of real numbers. Likewise, we define a graph $\mathcal{G}_y = (\mathcal{Y}, \mathcal{E}_y, \A_y)$ for the rows of $\F$, i.e., $\{g_n: \mathcal{Y} \rightarrow \mathbb{R}\}^{N}_{n=1}$, which are also graph signals. In a matrix of user-movie ratings for instance, we would have two graphs: one for the users and one for the movies. The graphs associated with the columns and rows yield the underlying structure of $\F$ that can be used to generate a pair of kernels.

Using $\A_x$ and $\A_y$, we can form the corresponding graph Laplacian as $\L_x:=\text{diag}(\A_x\bm 1)-\A_x$ and likewise for $\L_y$, that can serve as kernels. A family of graphical kernels results using a monotonic inverse function $r^\dagger(\cdot)$ on the Laplacian eigendecomposition as~\cite{smola2003kernels}
\begin{equation}\label{eq:kerlap}
        \K = \Q r^{\dagger}(\bm\Lambda)\Q^T.
\end{equation}
A possible choice of $r(\cdot)$ is the Gaussian radial basis function, which generates the diffusion kernel $r(\lambda_i)=e^{\eta\lambda_i}$, where $\lambda_i$ is the $i^\text{th}$ eigenvalue of $\L$, and $\eta$ a weight parameter. Alternatively, one can choose just the linear function $r(\lambda_i)=1 + \eta\lambda_i$, which results in the regularized Laplacian kernel.
By applying different weighting functions to the eigenvalues of $\L_x$ and $\L_y$, we promote smoother or more rapidly changing functions for the columns and rows of $\hat{\F}$~\cite{romerospace}. While  $\K_x$ and $\K_y$ are chosen as Laplacian kernels, this would not be the case for $\K_z = \K_y\otimes\K_x$ used in our KKMCEX context since it does not result from applying $r^\dagger(\cdot)$ to a Laplacian matrix. However, since $\K_x=\Q_x\Sig_x\Q_x^T$ and $\K_y=\Q_y\Sig_y\Q_y^T$, the eigendecomposition of $\K_z$ is $\K_z=(\Q_y\otimes\Q_x)(\Sig_y\otimes\Sig)(\Q^T_y \otimes\Q^T_x)$, and the notions of frequency and smoothness carry over. In other words, we are still promoting similarity among entries that are connected on the row and columns graphs through $\K_z$.

A key attribute in graph signal processing is that of ``graph bandlimitedness", which arises when a signal can be generated as a linear combination of a few eigenvectors of the Laplacian matrix. Therefore, a bandlimited graph signal belongs to an RKHS that is spanned by a bandlimited kernel~\cite{romero} that suppresses some of the frequencies of the graph. A bandlimited kernel is derived from the Laplacian matrix of a graph as in~\eqref{eq:kerlap}, using
\begin{equation}\label{eq:bandlimr}
    r(\lambda_i) = 0 \:~~ \forall i\notin\Psi,
\end{equation}
where $\Psi\subseteq\mathbb{N}$ contains the indices of frequencies not to be suppressed. As mentioned earlier, we define a graph for the columns and a graph for the rows of $\F$. Therefore, graph signals contained in the columns and rows may be bandlimited with different bandwidths. In order to form $\K_z$ our KKMCEX approach, we will need to apply different weighting functions akin to the one in~\eqref{eq:bandlimr} for kernel matrices $\K_x$ and $\K_y$.
\subsection{Kernels from known basis or features}
\label{sec:kerfeat}
In some applications the basis that spans the columns or rows of the unobserved matrix is assumed known, although this basis matrix needs not be a kernel. In order to be able to include such basis into the kernel framework, we need to generate kernel functions that span the same spaces as the columns and rows of $\F$.

Consider the input sets $\{\mathcal{X},\mathcal{Y}\}$ whose entries can be mapped into an Euclidean space through feature extraction functions $\theta_x:\mathcal{X}\rightarrow\mathbb{R}^{t_x}$ and $\theta_y:\mathcal{Y}\rightarrow\mathbb{R}^{t_y}$ such that $\theta_x(x_i):=\x_i$ and $\theta_y(y_j):=\y_j$. For instance, in a movie recommender system where the users are represented in $\mathcal{X}$ and the movies in $\mathcal{Y}$, each coordinate of $\y_j$ could denote the amount of action, drama and nudity in the movie, and $\x_i$ would contain weights according to the user's preference for each attribute. We may then use the feature vectors to determine the similarities among entries in $\mathcal{X}$ and $\mathcal{Y}$ by means of kernel functions.

Let $\X :=[\x_1,\ldots,\x_N]^T$ and $\Y :=[\y_1,\ldots,\y_L]^T$. If  $\text{span}(\F)\subseteq\text{span}(\X)$ and $\text{span}(\F^T)\subseteq\text{span}(\Y)$, we may conveniently resort to the linear kernel. The linear kernel amounts to the dot product in Euclidean spaces, which we use to define the pair $\kappa_x(x_i,x_j) = \x_i^T\x_j$ and $\kappa_y(y_i,y_j) = \y_i^T\y_j$. This leads to a straightforward construction of the kernel matrices for KKMCEX as $\K_x=\X\X^T$ and $\K_y=\Y\Y^T$.

Besides the linear kernel, it is often necessary to use a different kernel class for each $\kappa_x$ and $\kappa_y$ chosen to better fit the spaces spanned by the rows and columns of $\F$. For instance, the Gaussian kernel defined as $\kapx(x_i,x_j) = \text{exp}\{-\norm{\x_i-\x_j}/(2\eta)\}$, is a widely used alternative in the regression of smooth functions. 
\subsection{Feature maps for RRMCEX}

Aiming to construct $\tbphi_z$ that approximates $\K_z$ at reduced complexity, we choose $\tilde{\phi}_z$ with $d\ll S$. To approximate linear kernels, let $\tilde{\phi}_x(x_i)=\x_i$ and $\tilde{\phi}_y(y_j)=\y_j$ so that we can set $\tilde{\phi}_z(x_i,y_j)=\tilde{\phi}_y(y_j)\otimes\tilde{\phi}_x(x_i)$ and $\tbphi_z=\Y\otimes\X$. Note that in this case $\tbphi_z\tbphi_z^T$ yields a zero-error approximation to $\K_z=(\Y\otimes\X)(\Y\otimes\X)^T$, which renders the KKMCEX and RRMCEX solutions equivalent. 

On occasion, $\X$ and $\Y$ may have large column dimension, thus rendering $\Y\otimes\X$ undesirable as a feature matrix in RRMCEX. In order to overcome this hurdle, we build an approximation to the column space of $\Y\otimes\X$ from the SVD of $\X$ and $\Y$. Consider the SVDs of matrices $\X = \U_x\D_x\V_x^T$ and $\Y = \U_y\D_y\V_y^T$, to obtain $\Y\otimes\X = (\U_y\otimes\U_x)(\D_y\otimes\D_x)(\V_y^T\otimes\V_x^T)$. Let $\tbphi_z=\U_d\D_d$, where $\U_d$ and $\D_d$ respectively hold the top $d$ singular vectors and singular values of $\Y\otimes\X$. The SVD has cost $\mathcal{O}(Nt_x^2)$ for $\X$ and $\mathcal{O}(Lt_y^2)$ for $\Y$. Comparatively, the cost of building $\K_x$ and $\K_y$ for the linear kernel is $\mathcal{O}(N^2t_x)$ and $\mathcal{O}(L^2t_y)$, respectively. Therefore, choosing RRMCEX over KKMCEX in this case incurs no extra overhead.

When a function other than the linear kernel is selected, obtaining an approximation is more complex. To approximate a Gaussian kernel on $\mathcal{X}\times\mathcal{X}$, the vectors $\{\tilde{\phi}_x(\x_i)\}^N_{i=1}$ can be obtained by means of Taylor series expansion~\cite{cotter2011explicit} or random Fourier features~\cite{avron2017faster}, which can also approximate Laplacian, Cauchy and polynomial kernels~\cite{avron2017faster,rahimi2008random}. Therefore, the maps $\tilde{\phi}_x$ and $\tilde{\phi}_y$ must be designed according to the chosen kernels.

In some instances, such as when dealing with Laplacian kernels, $\X$ and $\Y$ are not available and we are only given $\K_x$ and $\K_y$. We are then unable to derive approximations to the kernel matrices by means of maps $\tilde{\phi}_x$ and $\tilde{\phi}_y$. Nevertheless, we can still derive an adequate $\tbphi_z$ to approximate $\K_z$. Indeed, Mercer's Theorem asserts that there are eigenfunctions $\{q_n\}^{NL}_{n=1}$ in $\mathcal{H}_z$ along with a sequence of nonnegative real numbers $\{\sigma_n\}^{NL}_{n=1}$, such that
\begin{equation}\label{eq:mercer}
    \kapz((x_i,y_j),(x_n,y_l)) = \sum^{NL}_{n=1}\sigma_nq_n(x_i,y_j)q_n(x_n,y_l).
\end{equation}
We can find \eqref{eq:mercer} from the eigendecomposition $\K_z=\bm\Q_z\Sig_z\bm\Q_z^T$, where $q_n$ is the $n\th$ eigenvector in $\Q_z$ and $\sigma_n$ the $n\th$ eigenvalue in $\Sig_z$. If $\K_z$ is low rank, we can construct $\tbphi_z=\Q_d\Sig^{{1\over 2}}_d$, where $\Q_d$ and $\Sig_d$ respectively hold the top $d$ eigenvectors and eigenvalues of $\K_z$. Note that, since $\K_z=(\Q_y\otimes\Q_x)(\Sig_y\otimes\Sig_x)(\Q^T_y \otimes\Q^T_x)$, we only need to eigendecompose smaller matrices $\K_x$ and $\K_y$ at complexity $\mathcal{O}(N^3+L^3)$. In some cases however, such as when using Laplacian kernels, the eigendecompositions are readily available, and $\bphi_z$ can be built at a markedly reduced cost.

\section{Numerical tests}

In this section, we test the performance of the KKMCEX, RRMCEX and online  (o)RRMCEX algorithms developed in Sections~\ref{sec:kmr},~\ref{sec:RRMCEX} and~\ref{sec:oRRMCEX}, respectively; and further compare them to the solution of \eqref{eq:factker} obtained with ALS~\cite{jain2013low} and SGD~\cite{zhou2012}. We run the tests on synthetic and real datasets, with and without noise, and measure the signal-to-noise-ratio (SNR) as $\frac{\frob{\F}}{\frob{\E}}$. The algorithms are run until convergence over $N_{r}=50$ realizations with different percentages of observed entries, denoted by $P_{s}=100S/(NL)$, which are taken uniformly at random per realization. As figure of merit, we use
\begin{equation}
\text{NMSE} = \frac{1}{N_{r}}\sum\limits_{i=1}^{N_{r}}\frac{\frob{\hat{\F}_i - \F}}{\frob{\F}}
\end{equation}
where $\hat{\F}_i$ is the estimate at realization $i$. We show results for the optimal combination of regularization and kernel parameters, found via grid search. Finally, ALS and SGD are initialized by a product of two random factor matrices, an both are stopped.
\subsection{Synthetic data}
\label{sec:synth}

 We first test the algorithms on synthetic data. The $250\times250$ data matrix is generated as $\F = \K_x\bm \Gamma\K_y$, where $\bm \Gamma$ is a $250\times250$ matrix of Gaussian random deviates. For $\K_x$ and $\K_y$ we use Laplacian diffusion kernels with $\eta=1$ based on Erdös-Rényi graphs, whose binary adjacency matrices are unweighted and any two vertices are connected with probability 0.03. The resulting $\F$ is approximately low-rank, with the sum of the first 10 eigenvalues accounting for 96\% of the total eigenvalue sum. Therefore, we set the rank bound $p$ to 10 for the ALS and SGD algorithms. Whether $\F$ is approximately low-rank or exactly low-rank did not affect our results, as they were similar for matrices with an exact rank of 10.
For KKMCEX, $\K_z=\K_y\otimes \K_x$, and for RRMCEX $\tbphi_z=\Q_d\Sig_d^{1\over 2}$, where $\Q_d$ contains the top 250 eigenvectors of $\K_z$, and $\Sig_d$ the corresponding top 250 eigenvalues. 

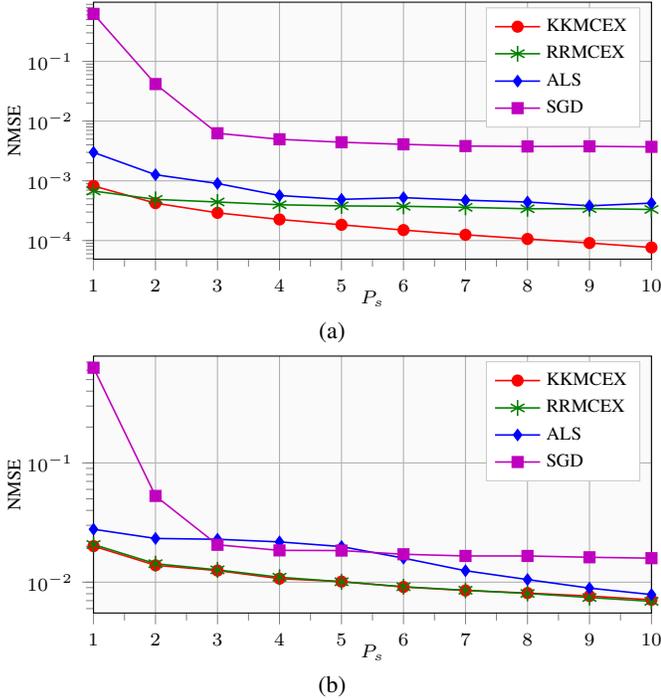
\begin{figure}[h]
    \centering
    \begin{subfigure}
     {
\begin{tikzpicture}

\definecolor{color0}{rgb}{0.75,0,0.75}

\begin{axis}[
height=\figH,
legend cell align={left},
legend entries={{KKMCEX},{RRMCEX},{ALS},{SGD}},
legend style={draw=white!80.0!black},
tick align=outside,
tick pos=left,
width=\figW,
x grid style={white!69.01960784313725!black},
xlabel={$P_s$},
xmajorgrids,
xmin=1, xmax=10,
y grid style={white!69.01960784313725!black},
ylabel={NMSE},
ymajorgrids,
ymin=4.88382165331581e-05, ymax=0.978615644932848,
ymode=log
]
\addplot [semithick, red, mark=*, mark size=2, mark options={solid}]
table [row sep=\\]{%
1	0.000821778847218432 \\
2	0.000422937705752729 \\
3	0.000291298774249317 \\
4	0.000225722830678218 \\
5	0.000183603082933908 \\
6	0.000149820761155694 \\
7	0.000124906794686742 \\
8	0.000106360117961479 \\
9	9.09586518638552e-05 \\
10	7.66123068812802e-05 \\
};
\addplot [semithick, green!50.0!black, mark=asterisk, mark size=3, mark options={solid}]
table [row sep=\\]{%
1	0.000675674983581312 \\
2	0.000488924217807112 \\
3	0.000442192429540376 \\
4	0.000397695192502633 \\
5	0.000380280922031144 \\
6	0.000374925823724514 \\
7	0.000359854728160683 \\
8	0.000340762766342931 \\
9	0.000342518965682444 \\
10	0.000331627220550889 \\
};
\addplot [semithick, blue, mark=diamond*, mark size=2, mark options={solid}]
table [row sep=\\]{%
1	0.00299140309290216 \\
2	0.00126552460923503 \\
3	0.000902782176175691 \\
4	0.000567784199019005 \\
5	0.000489489885751583 \\
6	0.000523251238464411 \\
7	0.00047390134608686 \\
8	0.000441769652456598 \\
9	0.000381510118197297 \\
10	0.000423437901984349 \\
};
\addplot [semithick, color0, mark=square*, mark size=2, mark options={solid}]
table [row sep=\\]{%
1	0.623840277307257 \\
2	0.0417191236980314 \\
3	0.00625594167720573 \\
4	0.00496380865108259 \\
5	0.00442688866176499 \\
6	0.00408757675406379 \\
7	0.00382310706938721 \\
8	0.00376319928530136 \\
9	0.00378410177668199 \\
10	0.00370831272356546 \\
};
\end{axis}

\end{tikzpicture}}
     \vspace{-0.5cm}
     \caption*{(a)}
     \end{subfigure}
    \vspace{0.3cm} 
    \begin{subfigure}
     {
\begin{tikzpicture}

\definecolor{color0}{rgb}{0.75,0,0.75}

\begin{axis}[
height=\figH,
legend cell align={left},
legend entries={{KKMCEX},{RRMCEX},{ALS},{SGD}},
legend style={draw=white!80.0!black},
tick align=outside,
tick pos=left,
width=\figW,
x grid style={white!69.01960784313725!black},
xlabel={$P_s$},
xmajorgrids,
xmin=1, xmax=10,
y grid style={white!69.01960784313725!black},
ylabel={NMSE},
ymajorgrids,
ymin=0.00551200936582286, ymax=0.787596513930837,
ymode=log
]
\addplot [semithick, red, mark=*, mark size=2, mark options={solid}]
table [row sep=\\]{%
1	0.0200607465590559 \\
2	0.0138518129666094 \\
3	0.0125122454674508 \\
4	0.0107077453891651 \\
5	0.0101105052560168 \\
6	0.00910750642633654 \\
7	0.00851517103687704 \\
8	0.00810307705811838 \\
9	0.00764943857383004 \\
10	0.00711368365750933 \\
};
\addplot [semithick, green!50.0!black, mark=asterisk, mark size=3, mark options={solid}]
table [row sep=\\]{%
1	0.0206982003288518 \\
2	0.0142875259722693 \\
3	0.0126793455948814 \\
4	0.0110049243471191 \\
5	0.0100773053600709 \\
6	0.00915508721561409 \\
7	0.00855127356151305 \\
8	0.00803848641826675 \\
9	0.00743576032319739 \\
10	0.00690659853560005 \\
};
\addplot [semithick, blue, mark=diamond*, mark size=2, mark options={solid}]
table [row sep=\\]{%
1	0.0277920921785841 \\
2	0.0232584725727422 \\
3	0.0228952271720093 \\
4	0.0217598909296638 \\
5	0.0198902875754209 \\
6	0.0159101841080542 \\
7	0.0124731809733922 \\
8	0.0105174994278346 \\
9	0.00891446704377881 \\
10	0.00787125904922002 \\
};
\addplot [semithick, color0, mark=square*, mark size=2, mark options={solid}]
table [row sep=\\]{%
1	0.628564023071457 \\
2	0.0528724524567559 \\
3	0.0205817504061724 \\
4	0.0184788318965524 \\
5	0.0184117711367433 \\
6	0.0171403748459506 \\
7	0.0165918213568707 \\
8	0.0165883261899603 \\
9	0.0161716059373092 \\
10	0.0158917231875577 \\
};
\end{axis}

\end{tikzpicture}}
     \vspace{-0.5cm}
     \caption*{(b)}
     \end{subfigure}
     \vspace{0.2cm}
    \caption{NMSE vs $P_s$ for (a) synthetic noiseless matrix; and (b) synthetic noisy matrix.}
    \label{fig:erd}
\end{figure}

Fig.~\ref{fig:erd} shows the simulated NMSE when $\M$ is noiseless (a) or noisy (b). We deduce from Fig.~\ref{fig:erd}a that all algorithms except SGD achieve a very small NMSE, below 0.003 at $P_s=1\%$ that falls to 0.007 at $P_s=10\%$. Of the three algorithms, KKMCEX has the smallest error except at $P_s=1\%$, where RRMCEX performs best. Although the error drops below 0.005 for SGD at $P_s > 4\%$, it is outperformed by the other algorithms by an order of magnitude. Fig.~\ref{fig:erd}b shows the same results when Gaussian noise is added to $\F$ at $\text{snr}=1$. We observe that KKMCEX and RRMCEX are matched and attain the lowest error, whereas ALS and SGD have larger errors across $P_s$. This corroborates that thanks to the regularization term that smoothes over all the entries instead of row or column-wise, the noise effect is reduced. Interestingly, RRMCEX is able to reduce the noise effect despite the bias it suffers because it only uses the top 250 eigenvalues of $\K_z$ from a total of $62{,}500$. This is mainly due to the additive noise being evenly distributed across the eigenspace of $\K_z$. Therefore, by keeping only the eigenvectors associated with the top 250 eigenvalues in $\K_z$, we are discarding those dimensions in which the SNR is lower.

\begin{figure}[h]
    \centering
    \begin{subfigure}
     {
\begin{tikzpicture}

\definecolor{color0}{rgb}{0.75,0,0.75}

\begin{axis}[
height=\figH,
legend cell align={left},
legend entries={{KKMCEX},{RRMCEX},{ALS},{SGD}},
legend style={at={(0.97,0.01)}, anchor=south east, draw=white!80.0!black},
tick align=outside,
tick pos=left,
width=\figW,
x grid style={white!69.01960784313725!black},
xlabel={$P_s$},
xmajorgrids,
xmin=1, xmax=10,
y grid style={white!69.01960784313725!black},
ylabel={Time},
ymajorgrids,
ymin=0.0111275791521016, ymax=9.51646567322877,
ymode=log
]

\addplot [semithick, red, mark=*, mark size=2, mark options={solid}]
table [row sep=\\]{%
1	0.02282751816 \\
2	0.22575990032 \\
3	0.4674190852 \\
4	0.87934864384 \\
5	1.37819069788 \\
6	2.07198395664 \\
7	2.96970280124 \\
8	4.05846853884 \\
9	5.49073797576 \\
10	7.14024266344 \\
};
\addplot [semithick, green!50.0!black, mark=asterisk, mark size=3, mark options={solid}]
table [row sep=\\]{%
1	0.31967489748 \\
2	0.3183728088 \\
3	0.32348011452 \\
4	0.3353346496 \\
5	0.34757577024 \\
6	0.34951128928 \\
7	0.35352574192 \\
8	0.35897556868 \\
9	0.36348748496 \\
10	0.3640883488 \\
};
\addplot [semithick, blue, mark=diamond*, mark size=2, mark options={solid}]
table [row sep=\\]{%
1	1.369231 \\
2	1.366542 \\
3	1.179424 \\
4	1.146785 \\
5	0.926719 \\
6	0.765323 \\
7	0.627878 \\
8	0.642418 \\
9	0.690996 \\
10	0.66148 \\
};
\addplot [semithick, color0, mark=square*, mark size=2, mark options={solid}]
table [row sep=\\]{%
1	0.669476 \\
2	0.961692 \\
3	1.065656 \\
4	0.989634 \\
5	0.988957 \\
6	1.003926 \\
7	1.073133 \\
8	1.10814 \\
9	1.155521 \\
10	1.200103 \\
};
\end{axis}

\end{tikzpicture}}
     \vspace{-0.5cm}
     \caption*{(a)}
     \end{subfigure}
    \vspace{0.3cm} 
    \begin{subfigure}
     {
\begin{tikzpicture}

\definecolor{color0}{rgb}{0.75,0,0.75}

\begin{axis}[
height=\figH,
legend cell align={left},
legend entries={{KKMCEX},{RRMCEX},{ALS},{SGD}},
legend style={at={(0.97,0.03)}, anchor=south east, draw=white!80.0!black},
tick align=outside,
tick pos=left,
width=\figW,
x grid style={white!69.01960784313725!black},
xlabel={$P_s$},
xmajorgrids,
xmin=1, xmax=10,
y grid style={white!69.01960784313725!black},
ylabel={Time},
ymajorgrids,
ymin=0.0029717959653632, ymax=24.3026160942703,
ymode=log
]
\addplot [semithick, red, mark=*, mark size=2, mark options={solid}]
table [row sep=\\]{%
1	0.0222845424 \\
2	0.11282208668 \\
3	0.3456869832 \\
4	0.7720588242 \\
5	1.26668563328 \\
6	1.95039019188 \\
7	2.8593453606 \\
8	3.91640782088 \\
9	5.32485186532 \\
10	6.95027293684 \\
};
\addplot [semithick, green!50.0!black, mark=asterisk, mark size=3, mark options={solid}]
table [row sep=\\]{%
1	0.31465842644 \\
2	0.31917709908 \\
3	0.32681671564 \\
4	0.33212138476 \\
5	0.33972369216 \\
6	0.34594686692 \\
7	0.35074560256 \\
8	0.35642319636 \\
9	0.3615836954 \\
10	0.36873499976 \\
};
\addplot [semithick, blue, mark=diamond*, mark size=2, mark options={solid}]
table [row sep=\\]{%
1	17.472369 \\
2	4.843284 \\
3	2.93265 \\
4	5.138004 \\
5	4.478572 \\
6	2.627292 \\
7	2.95029 \\
8	2.617679 \\
9	2.479773 \\
10	2.206412 \\
};
\addplot [semithick, color0, mark=square*, mark size=2, mark options={solid}]
table [row sep=\\]{%
1	0.987693 \\
2	0.795266 \\
3	0.728335 \\
4	0.828534 \\
5	0.913801 \\
6	0.955808 \\
7	1.054415 \\
8	1.079694 \\
9	1.12484 \\
10	1.204513 \\
};
\end{axis}

\end{tikzpicture}}
     \vspace{-0.5cm}
     \caption*{(b)}
     \end{subfigure}
     \vspace{0.2cm}
    \caption{Time vs $P_s$ for (a) synthetic noiseless matrix; and (b) synthetic noisy matrix.}
    \label{fig:erdtime}
\end{figure}
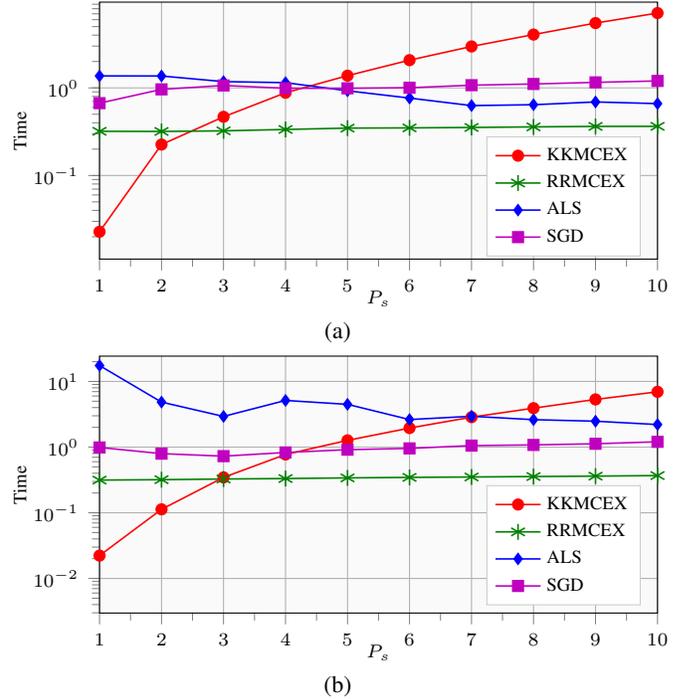

Fig.~\ref{fig:erdtime} depicts the time needed for the algorithms to perform the simulations reported in Fig.~\ref{fig:erd}. We observe in Fig.~\ref{fig:erdtime}a that RRMCEX has an almost constant computation time, whereas the time for KKMCEX grows with $P_s$ as expected since the size of the matrix to be inverted increases with $S$. On the other hand, ALS and SGD require less time than KKMCEX for the larger values of $P_s$, but are always outperformed by RRMCEX. Moreover, the ALS time is reduced as $P_s$ increases because the number of iterations required to converge to the minimum is smaller. Fig.~\ref{fig:erdtime}b suggests that the noise only impacts ALS, which has its computation time rise considerably across all $P_s$. Overall, Figures~\ref{fig:erd} and~\ref{fig:erdtime} illustrate that RRMCEX has the best performance for the synthetic matrix both in terms of NMSE and computational cost.

\subsection{Temperature measurements}
\label{sec:real}
In this case, $\F$ has size $150\times365$ comprising temperature readings taken by 150 stations over 365 days in 2002 in the United States\footnote{http://earthpy.org/ulmo.html}. The columns and rows of $\F$ are modeled as graph signals with $\A_x$ and $\A_y$ for the graphs formed by the stations and the days of the year, respectively. We use  Laplacian diffusion kernels both for $\K_x$ and $\K_y$, while $\K_z$ and $\tbphi_z$ are obtained as in the tests on synthetic data, except that $\tbphi_z$ is constructed with the top 150 eigenvectors of $\K_z$. The matrix $\A_x$ is obtained as in~\cite{chen}, where a graph $\mathcal{G}$ with unweighted adjacency matrix $\P$ is generated for the stations, and each station is a vertex connected to the 8 geographically closest stations. Next, we obtain the undirected graph $\mathcal{G}'$ with symmetric adjacency matrix $\P’ =\text{sign}(\P^T+\P)$. Finally, the entries of $\A_x$ are constructed as $(\A_x)_{i,j}= \text{exp}(-\frac{N^2 d_{i,j}}{\sum_{i,j} d_{i,j}})$, where $\{d_{i,j}\}$ are geodesic distances on $\mathcal{G}$. We adopt a graph on which each day is a vertex and each day is connected to the 10 past and future days to form $\A_y$.

\begin{figure}[h]
    \centering
    \begin{subfigure}
     {
\begin{tikzpicture}

\definecolor{color0}{rgb}{0.75,0,0.75}

\begin{axis}[
height=\figH,
legend cell align={left},
legend entries={{KKMCEX},{RRMCEX},{ALS},{SGD}},
legend style={draw=white!80.0!black},
tick align=outside,
tick pos=left,
width=\figW,
x grid style={white!69.01960784313725!black},
xlabel={$P_s$},
xmajorgrids,
xmin=1, xmax=10,
y grid style={white!69.01960784313725!black},
ylabel={NMSE},
ymajorgrids,
ymin=0.00692964866433607, ymax=0.351656571733216,
ymode=log
]
\addplot [semithick, red, mark=*, mark size=2, mark options={solid}]
table [row sep=\\]{%
1.00091324200913	0.0256307065484197 \\
2	0.0190901149720963 \\
2.99908675799087	0.0162601468499377 \\
4	0.0147931756312273 \\
5.00091324200913	0.0136539600260737 \\
6	0.0127408998799112 \\
7.00091324200913	0.0118621886071249 \\
8	0.0111164779965638 \\
9.00091324200913	0.0105285210481886 \\
10	0.00999251182923801 \\
};
\addplot [semithick, green!50.0!black, mark=asterisk, mark size=3, mark options={solid}]
table [row sep=\\]{%
1.00091324200913	0.0277026372205054 \\
2	0.0238578794189882 \\
2.99908675799087	0.0227024115242863 \\
4	0.022163921561617 \\
5.00091324200913	0.0218008848768301 \\
6	0.021577638030536 \\
7.00091324200913	0.0214216535968899 \\
8	0.0213223569874865 \\
9.00091324200913	0.021246723910568 \\
10	0.0211733041074308 \\
};
\addplot [semithick, blue, mark=diamond*, mark size=2, mark options={solid}]
table [row sep=\\]{%
1.00091324200913	0.0336548136977337 \\
2	0.0222660908772749 \\
2.99908675799087	0.0173154667694766 \\
4	0.0147970087362766 \\
5.00091324200913	0.0130292059812877 \\
6	0.0115503230884679 \\
7.00091324200913	0.010257179470636 \\
8	0.00949402711619182 \\
9.00091324200913	0.00895296688868364 \\
10	0.00828380257574766 \\
};
\addplot [semithick, color0, mark=square*, mark size=2, mark options={solid}]
table [row sep=\\]{%
1.00091324200913	0.294171242051376 \\
2	0.0845997728305092 \\
2.99908675799087	0.040205885400084 \\
4	0.0292762788104468 \\
5.00091324200913	0.0269300569310588 \\
6	0.0258771015600159 \\
7.00091324200913	0.0252457157271043 \\
8	0.024888228178419 \\
9.00091324200913	0.0246794848552846 \\
10	0.0245176190926173 \\
};
\end{axis}

\end{tikzpicture}}
     \vspace{-0.5cm}
     \caption*{(a)}
     \end{subfigure}
     \vspace{0.3cm}
    \begin{subfigure}
     {
\begin{tikzpicture}

\definecolor{color0}{rgb}{0.75,0,0.75}

\begin{axis}[
height=\figH,
legend cell align={left},
legend entries={{KKMCEX},{RRMCEX},{ALS},{SGD}},
legend style={draw=white!80.0!black},
tick align=outside,
tick pos=left,
width=\figW,
x grid style={white!69.01960784313725!black},
xlabel={$P_s$},
xmajorgrids,
xmin=1, xmax=10,
y grid style={white!69.01960784313725!black},
ylabel={NMSE},
ymajorgrids,
ymin=0.0313629501346448, ymax=0.490575587881076,
ymode=log
]
\addplot [semithick, red, mark=*, mark size=2, mark options={solid}]
table [row sep=\\]{%
1.00091324200913	0.0533607731770825 \\
2	0.0490399990859721 \\
2.99908675799087	0.0466812044273482 \\
4	0.0449024659259294 \\
5.00091324200913	0.0433605064988201 \\
6	0.0418751141594174 \\
7.00091324200913	0.0405609801682441 \\
8	0.0396354487296585 \\
9.00091324200913	0.0380813868718206 \\
10	0.0375324336086269 \\
};
\addplot [semithick, green!50.0!black, mark=asterisk, mark size=3, mark options={solid}]
table [row sep=\\]{%
1.00091324200913	0.0529927693298436 \\
2	0.0478801061875746 \\
2.99908675799087	0.0451442077924408 \\
4	0.0430977643762603 \\
5.00091324200913	0.041325809832392 \\
6	0.0393717604238542 \\
7.00091324200913	0.038205815162746 \\
8	0.037344169283947 \\
9.00091324200913	0.0359091572832942 \\
10	0.0355388010616597 \\
};
\addplot [semithick, blue, mark=diamond*, mark size=2, mark options={solid}]
table [row sep=\\]{%
1.00091324200913	0.0542688071885271 \\
2	0.0493024634306788 \\
2.99908675799087	0.0470546790064023 \\
4	0.0455395997695119 \\
5.00091324200913	0.0438479558249749 \\
6	0.0427281940209701 \\
7.00091324200913	0.0420676217513413 \\
8	0.041362954755327 \\
9.00091324200913	0.0404458547307689 \\
10	0.0403429951529492 \\
};
\addplot [semithick, color0, mark=square*, mark size=2, mark options={solid}]
table [row sep=\\]{%
1.00091324200913	0.432932379268893 \\
2	0.155890394551292 \\
2.99908675799087	0.090697578827736 \\
4	0.0676796779470154 \\
5.00091324200913	0.0620063260457825 \\
6	0.0579802637183204 \\
7.00091324200913	0.0630941676751302 \\
8	0.0611822287379813 \\
9.00091324200913	0.0567354144135076 \\
10	0.0558682089938228 \\
};
\end{axis}

\end{tikzpicture}}
     \vspace{-0.5cm}
     \caption*{(b)}
     \end{subfigure}
     \vspace{0.2cm}
    \caption{NMSE vs $P_s$ for the matrix of temperature measurements (a) without noise, and (b) with noise.}
    \label{fig:time}
\end{figure}
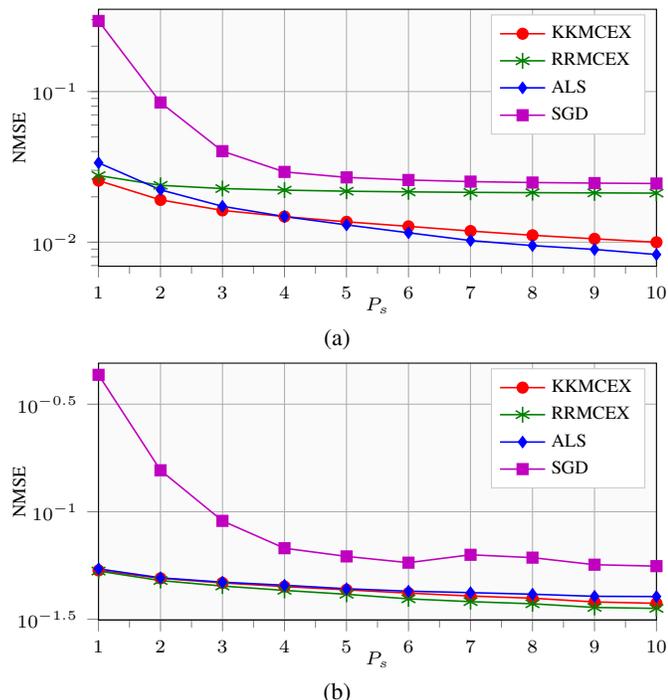

Fig.~\ref{fig:time} shows the simulated tests for (a) the matrix of temperature readings, and (b) the same matrix with additive Gaussian noise at $\text{snr}=1$. Fig.~\ref{fig:time}a demonstrates that KKMCEX achieves the lowest error for the first three data points, while afterwards ALS has a slight edge over KKMCEX. The real data matrix $\F$ is approximately low rank, since the sum of the first 10 singular values accounts for 75\% of the total sum. This explains why RRMCEX fares worse than KKMCEX. Because it only contains the top 150 eigenvectors of $\K_z$, which is full rank, the vectorized data $\m$ lies in part outside the space spanned by $\tbphi_z$. Indeed, increasing the number of eigenvectors in $\tbphi_z$ results in a lower error, although the computational cost increases accordingly. Fig.~\ref{fig:time}b further demonstrates that the addition of noise has the least impact on RRMCEX, which attains the lowest error slightly below KKMCEX. On the other hand, ALS has a marginally higher error whereas the gap between SGD and the other three methods remains. Fig.~\ref{fig:timetime} depicts the computational time for the results in Fig.~\ref{fig:time}a, which are similar to those obtained for the synthetic dataset.
\begin{figure}[h]
    \centering
\begin{tikzpicture}

\definecolor{color0}{rgb}{0.75,0,0.75}

\begin{axis}[
height=\figH,
legend cell align={left},
legend entries={{KKMCEX},{RRMCEX},{ALS},{SGD}},
legend style={at={(0.03,0.97)}, anchor=north west, draw=white!80.0!black},
tick align=outside,
tick pos=left,
width=\figW,
x grid style={white!69.01960784313725!black},
xlabel={$P_s$},
xmajorgrids,
xmin=1, xmax=10,
y grid style={white!69.01960784313725!black},
ylabel={Time},
ymajorgrids,
ymin=0.048286544059414, ymax=25.16823439682803,
ymode=log
]
\addplot [semithick, red, mark=*, mark size=2, mark options={solid}]
table [row sep=\\]{%
1.00091324200913	0.07032311658 \\
2	0.08204411374 \\
2.99908675799087	0.22107592278 \\
4	0.54363951514 \\
5.00091324200913	0.9478805363 \\
6	1.43248152978 \\
7.00091324200913	2.03227154466 \\
8	2.82915905488 \\
9.00091324200913	3.76160159184 \\
10	4.94787113828 \\
};
\addplot [semithick, green!50.0!black, mark=asterisk, mark size=3, mark options={solid}]
table [row sep=\\]{%
1.00091324200913	0.0601961356 \\
2	0.08446870902 \\
2.99908675799087	0.0903367905 \\
4	0.09310100248 \\
5.00091324200913	0.09865545922 \\
6	0.10152996146 \\
7.00091324200913	0.10235950096 \\
8	0.10435171948 \\
9.00091324200913	0.10363987188 \\
10	0.10719268776 \\
};
\addplot [semithick, blue, mark=diamond*, mark size=2, mark options={solid}]
table [row sep=\\]{%
1.00091324200913	0.35386 \\
2	0.60497 \\
2.99908675799087	0.64333 \\
4	0.459354 \\
5.00091324200913	0.441354 \\
6	0.362905 \\
7.00091324200913	0.356698 \\
8	0.414517 \\
9.00091324200913	0.407419 \\
10	0.413637 \\
};
\addplot [semithick, color0, mark=square*, mark size=2, mark options={solid}]
table [row sep=\\]{%
1.00091324200913	0.928816 \\
2	0.906927 \\
2.99908675799087	0.917652 \\
4	0.886666 \\
5.00091324200913	0.889183 \\
6	0.875888 \\
7.00091324200913	0.877353 \\
8	0.882201 \\
9.00091324200913	0.865291 \\
10	0.852829 \\
};
\end{axis}

\end{tikzpicture}
    \caption{Time vs $P_s$ for the noiseless matrix of temperature measurements.}
    \label{fig:timetime}
\end{figure}
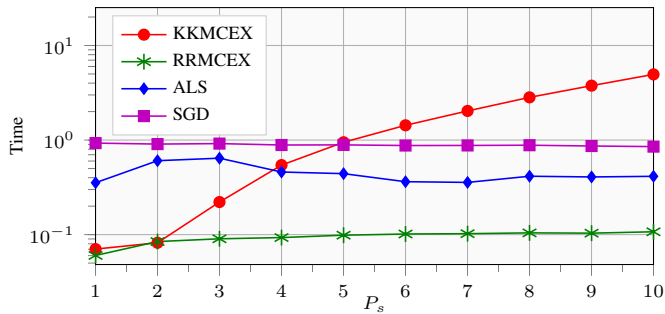

\subsection{Mushroom dataset}
The Mushroom dataset\footnote{http://archive.ics.uci.edu/ml} comprises 8,124 labels and as many feature vectors. Each label indicates whether a sample is edible or poisonous, and each vector has 22 entries describing the shape, color, etc. of the mushroom sample. After removing items with missing features, we are left with 5,643 labels and feature vectors. Here, we solve a clustering problem in which $\F$ is a $5,643\times 5,643$ adjacency matrix, where $\F_{i,j} = 1$ if the $i^{\text{th}}$ and the $j^{\text{th}}$ mushroom samples belong to the same class (poisonous or edible), and $\F_{i,j} = -1$ otherwise.

We encode the matrix stacking the feature vectors via one-hot encoding to produce a $5,643\times 98$ binary feature matrix analogous to $\X$ in Section \ref{sec:kerfeat}. We build the kernel matrix $\K_x$ from the Pearson correlation coefficients of the rows of $\X$, and let $\K_y=\K_x$. The feature matrix $\tbphi_z$ for RRMCEX is built using the top 3,000 left singular vectors of $\X\otimes\X$. 

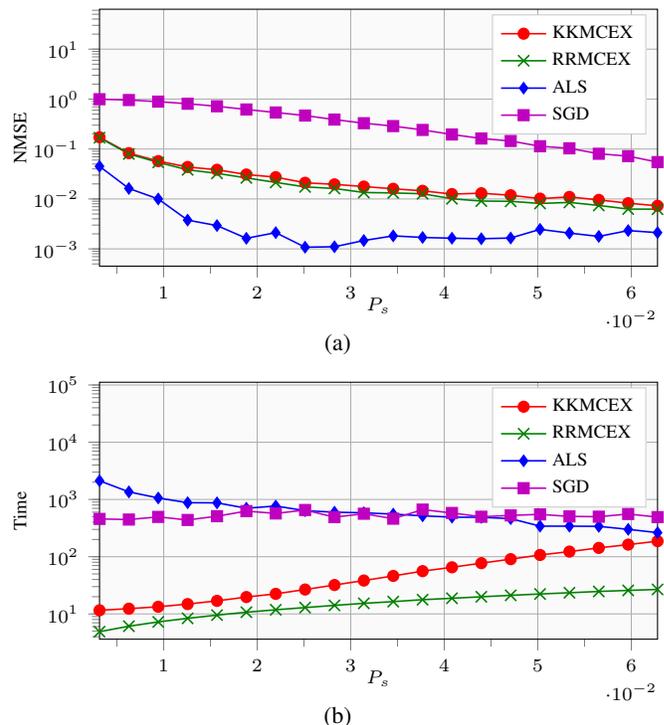
\begin{figure}[h]
    \centering
    \begin{subfigure}
     {
\begin{tikzpicture}

\definecolor{color0}{rgb}{0.75,0,0.75}
\begin{axis}[
height=\figH,
legend cell align={left},
legend entries={{KKMCEX},{RRMCEX},{ALS},{SGD}},
legend style={draw=white!80.0!black},
tick align=outside,
tick pos=left,
width=\figW,
x grid style={white!69.01960784313725!black},
xlabel={$P_s$},
xmajorgrids,
xmin=0.00314036334443546, xmax=0.0628072668887092,
y grid style={white!69.01960784313725!black},
ylabel={NMSE},
ymajorgrids,
ymin=0.00045102187752741, ymax=63,
ymode=log
]
\addplot [semithick, red, mark=*, mark size=2, mark options={solid}]
table [row sep=\\]{%
0.00314036334443546	0.171989905495545 \\
0.00628072668887092	0.0827146913629449 \\
0.00942109003330638	0.0572777103211728 \\
0.0125614533777418	0.04366067210594 \\
0.0157018167221773	0.0384377221230591 \\
0.0188421800666128	0.0308616371483135 \\
0.0219825434110482	0.0273252169996439 \\
0.0251229067554837	0.0210825126783188 \\
0.0282632700999191	0.0196152091173045 \\
0.0314036334443546	0.0177188532305446 \\
0.0345439967887901	0.0159715229519306 \\
0.0376843601332255	0.0145651199308738 \\
0.040824723477661	0.012525576687825 \\
0.0439650868220964	0.0130156420847947 \\
0.0471054501665319	0.0118623487562207 \\
0.0502458135109674	0.0101551875830621 \\
0.0533861768554028	0.0109860756369488 \\
0.0565265401998383	0.00957240613961455 \\
0.0596669035442737	0.008230241762541 \\
0.0628072668887092	0.00728893352996385 \\
};
\addplot [semithick, green!50.0!black, mark=x, mark size=3, mark options={solid}]
table [row sep=\\]{%
0.00314036334443546	0.168634170249586 \\
0.00628072668887092	0.0796306204142648 \\
0.00942109003330638	0.0539633253922965 \\
0.0125614533777418	0.0383289636747577 \\
0.0157018167221773	0.032575818027752 \\
0.0188421800666128	0.0264052301620971 \\
0.0219825434110482	0.0214530279053629 \\
0.0251229067554837	0.0174363775732962 \\
0.0282632700999191	0.0162099777571205 \\
0.0314036334443546	0.0134770828373522 \\
0.0345439967887901	0.0131994244718906 \\
0.0376843601332255	0.0127198030590217 \\
0.040824723477661	0.0100740092569746 \\
0.0439650868220964	0.00907838846225483 \\
0.0471054501665319	0.00896591320871053 \\
0.0502458135109674	0.00813273712907167 \\
0.0533861768554028	0.00852616184886254 \\
0.0565265401998383	0.00741557863282963 \\
0.0596669035442737	0.00627432034764827 \\
0.0628072668887092	0.00623819360773389 \\
};\addplot [semithick, blue, mark=diamond*, mark size=2, mark options={solid}]
table [row sep=\\]{%
0.00314036334443546	0.0450136994407415 \\
0.00628072668887092	0.0162311463622119 \\
0.00942109003330638	0.0099976785217729 \\
0.0125614533777418	0.00376917659469899 \\
0.0157018167221773	0.00293144494694876 \\
0.0188421800666128	0.00163002429072057 \\
0.0219825434110482	0.0021153961870307 \\
0.0251229067554837	0.00108069761627402 \\
0.0282632700999191	0.0011038177126906 \\
0.0314036334443546	0.00147130118895051 \\
0.0345439967887901	0.001823057009197567 \\
0.0376843601332255	0.001694722356244698 \\
0.040824723477661	0.00163547197508415 \\
0.0439650868220964	0.001598572709062814 \\
0.0471054501665319	0.001650244341227013 \\
0.0502458135109674	0.00246442317977105 \\
0.0533861768554028	0.00207246547303119 \\
0.0565265401998383	0.00177824239340247 \\
0.0596669035442737	0.00232600532242899 \\
0.0628072668887092	0.00211734857978917 \\
};
\addplot [semithick, color0, mark=square*, mark size=2, mark options={solid}]
table [row sep=\\]{%
0.00314036334443546	0.993552  \\
0.00628072668887092	0.957036  \\
0.00942109003330638	0.88936   \\
0.0125614533777418	0.807248  \\
0.0157018167221773	0.718215  \\
0.0188421800666128	0.618946  \\
0.0219825434110482	0.538834  \\
0.0251229067554837	0.467937  \\
0.0282632700999191	0.388986  \\
0.0314036334443546	0.328859  \\
0.0345439967887901	0.28783   \\
0.0376843601332255	0.241137  \\
0.040824723477661	0.195646  \\
0.0439650868220964	0.162546  \\
0.0471054501665319	0.144713  \\
0.0502458135109674	0.113649  \\
0.0533861768554028	0.10337   \\
0.0565265401998383	0.0803813 \\
0.0596669035442737	0.0719788  \\
0.0628072668887092	0.0547096  \\
};
\end{axis}

\end{tikzpicture}}
     \vspace{-0.5cm}
     \caption*{(a)}
     \end{subfigure}
     \vspace{0.3cm}
    \begin{subfigure}
     {
\begin{tikzpicture}

\definecolor{color0}{rgb}{0.75,0,0.75}

\begin{axis}[
height=\figH,
legend cell align={left},
legend entries={{KKMCEX},{RRMCEX},{ALS},{SGD}},
legend style={draw=white!80.0!black},
tick align=outside,
tick pos=left,
width=\figW,
x grid style={white!69.01960784313725!black},
xlabel={$P_s$},
xmajorgrids,
xmin=0.00314036334443546, xmax=0.0628072668887092,
y grid style={white!69.01960784313725!black},
ylabel={Time},
ymajorgrids,
ymin=3.62445354153321, ymax=111695.18556762265,
ymode=log
]
\addplot [semithick, red, mark=*, mark size=2, mark options={solid}]
table [row sep=\\]{%
0.00314036334443546	11.584255443 \\
0.00628072668887092	12.3575701782 \\
0.00942109003330638	13.3410106978 \\
0.0125614533777418	14.8621302286 \\
0.0157018167221773	16.9242811348 \\
0.0188421800666128	19.7342306078 \\
0.0219825434110482	22.4208252164 \\
0.0251229067554837	26.6854753586 \\
0.0282632700999191	31.9960121924 \\
0.0314036334443546	38.3203455758 \\
0.0345439967887901	46.1262642784 \\
0.0376843601332255	55.9057545986 \\
0.040824723477661	65.2809044538 \\
0.0439650868220964	77.1407983798 \\
0.0471054501665319	90.563341461 \\
0.0502458135109674	107.1701975606 \\
0.0533861768554028	122.7738599972 \\
0.0565265401998383	142.4330724544 \\
0.0596669035442737	162.5986720434 \\
0.0628072668887092	186.6564862802 \\
};
\addplot [semithick, green!50.0!black, mark=x, mark size=3, mark options={solid}]
table [row sep=\\]{%
0.00314036334443546	4.8950628324 \\
0.00628072668887092	6.1059128792 \\
0.00942109003330638	7.259508666 \\
0.0125614533777418	8.3761219136 \\
0.0157018167221773	9.5430237 \\
0.0188421800666128	10.7143886496 \\
0.0219825434110482	11.825937095 \\
0.0251229067554837	12.9184370644 \\
0.0282632700999191	14.1037889482 \\
0.0314036334443546	15.3157907844 \\
0.0345439967887901	16.4182801374 \\
0.0376843601332255	17.7425859546 \\
0.040824723477661	18.7835732872 \\
0.0439650868220964	19.9347100962 \\
0.0471054501665319	21.1262911152 \\
0.0502458135109674	22.311693099 \\
0.0533861768554028	23.450865014 \\
0.0565265401998383	24.6972985254 \\
0.0596669035442737	25.7617203986 \\
0.0628072668887092	26.7956216014 \\
};
\addplot [semithick, blue, mark=diamond*, mark size=2, mark options={solid}]
table [row sep=\\]{%
0.00314036334443546	2120.597444 \\
0.00628072668887092	1356.57606 \\
0.00942109003330638	1061.612816 \\
0.0125614533777418	877.174761 \\
0.0157018167221773	872.489867 \\
0.0188421800666128	702.231649 \\
0.0219825434110482	767.795181 \\
0.0251229067554837	636.959429 \\
0.0282632700999191	599.070521 \\
0.0314036334443546	584.835232 \\
0.0345439967887901	553.880228 \\
0.0376843601332255	517.542035 \\
0.040824723477661	490.948276 \\
0.0439650868220964	487.489341 \\
0.0471054501665319	462.3048 \\
0.0502458135109674	341.82463 \\
0.0533861768554028	341.549663 \\
0.0565265401998383	338.901639 \\
0.0596669035442737	300.495441 \\
0.0628072668887092	263.027038 \\
};
\addplot [semithick, color0, mark=square*, mark size=2, mark options={solid}]
table [row sep=\\]{%
0.00314036334443546	  456.952\\
0.00628072668887092	  445.818\\
0.00942109003330638	  495.655\\
0.0125614533777418	  438.176\\
0.0157018167221773	  510.129\\
0.0188421800666128	  625.803\\
0.0219825434110482	  569.584\\
0.0251229067554837	  653.585\\
0.0282632700999191	  490.909\\
0.0314036334443546	  563.39 \\
0.0345439967887901	  461.597\\
0.0376843601332255	  664.322\\
0.040824723477661	  580.516\\
0.0439650868220964	  498.659\\
0.0471054501665319	  526.458\\
0.0502458135109674	  549.139\\
0.0533861768554028	  505.442\\
0.0565265401998383	  499.909\\
0.0596669035442737	  557.102 \\
0.0628072668887092	  491.418 \\
};
\end{axis}
\end{tikzpicture}}
     \vspace{-0.5cm}
     \caption*{(b)}
     \end{subfigure}
     \vspace{0.2cm}
     \caption{results for the mushroom adjacency matrix as (a) NMSE vs $P_s$, and (b) time vs $P_s$.}
    \label{fig:mushroom}
\end{figure}

Fig.~\ref{fig:mushroom}a shows the test results on the mushroom adjacency matrix from $S=2,000$ $(P_s=0.006\%)$ to $S=20{,}000$ $(P_s=0.036\%)$ in steps of 1,000 observations. KKMCEX and RRMCEX achieve similar NMSE, while SGD has an error one order of magnitude higher, and ALS outperforms both by around one order of magnitude. This difference with ALS is because regression-based methods restrict the solution to belong to the space spanned by the basis matrix. On the other hand, when solving \eqref{eq:factker}, we do not enforce the constraints $\W\in\mathcal{H}_x$, $\H\in\mathcal{H}_y$~\cite{bazerque2013,zhou2012}. Therefore, when the prior information encoded in the kernel matrices is imperfect, ALS might be able to find a factorization that fits better the data at the cost of having $\hat{\W}\notin\mathcal{H}_x$ and $\hat{\H}\notin\mathcal{H}_y$. However, in Fig.~\ref{fig:mushroom}b we see that the computational cost for ALS and SGD is much higher than for KKMCEX and RRMCEX for the smaller $\P_s$. On the other hand, the time for ALS decreases with $S$ due to requiring les iterations to converge, whereas for KKRRMCEX and RRMCEX it increases with $S$.

\subsection{Online MC}

\begin{figure}[h]
    \centering
    \begin{subfigure}
     {\input{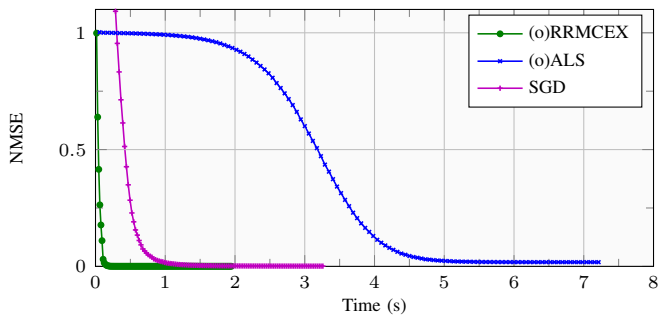}
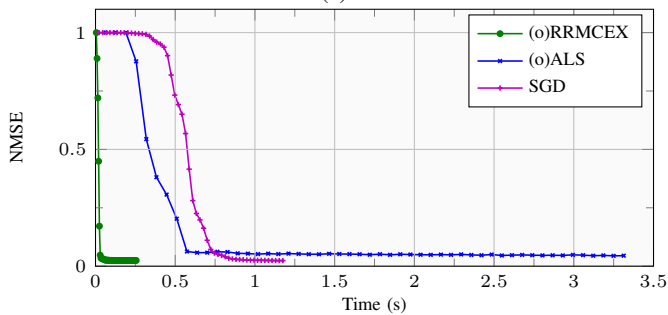}
     \vspace{-0.5cm}
     \caption*{(a)}
     \vspace{0.2cm}
     \end{subfigure}
    \begin{subfigure}
     {
\begin{tikzpicture}

\definecolor{color0}{rgb}{0.75,0,0.75}

\begin{axis}[
xlabel={Time (s)},
ylabel={NMSE},
xmin=0, xmax=3.5,
ymin=0, ymax=1.1,
axis on top,
width=\figW,
height=\figH,
xmajorgrids,
ymajorgrids,
legend entries={{(o)RRMCEX},{(o)ALS},{SGD}},
legend cell align={left}
]
\addplot [green!50.0!black, mark=*, mark size=1, mark options={solid}]
table {%
0.00490796105769231 0.999975672275956
0.00981592211538461 0.889492520761911
0.0147238831730769 0.720492167401225
0.0196318442307692 0.449266562065326
0.0245398052884615 0.171542851485626
0.0294477663461538 0.0476519277086306
0.0343557274038462 0.0347896491783817
0.0392636884615385 0.0323203772669232
0.0441716495192308 0.0312347366870316
0.0490796105769231 0.0308784663297781
0.0539875716346154 0.0288069175518712
0.0588955326923077 0.026456531777982
0.06380349375 0.0260449499126988
0.0687114548076923 0.0258821427175
0.0736194158653846 0.0258078430623894
0.0785273769230769 0.0256810685649794
0.0834353379807692 0.0252291219819048
0.0883432990384615 0.0250091044905034
0.0932512600961538 0.0249465440430469
0.0981592211538461 0.0249236682967721
0.103067182211538 0.0249117595907815
0.107975143269231 0.0248308161499968
0.112883104326923 0.0247239778215891
0.117791065384615 0.0246971756088273
0.122699026442308 0.0246865497895403
0.1276069875 0.0246827780404141
0.132514948557692 0.0246716921852989
0.137422909615385 0.0246372810866419
0.142330870673077 0.0246205969877441
0.147238831730769 0.0246156577591427
0.152146792788462 0.0246133594796298
0.157054753846154 0.0246123765283601
0.161962714903846 0.0246050725773878
0.166870675961538 0.0245948432407431
0.171778637019231 0.0245927224640086
0.176686598076923 0.0245917918281616
0.181594559134615 0.0245913543784919
0.186502520192308 0.0245900668120671
0.19141048125 0.0245868867628096
0.196318442307692 0.0245852995606108
0.201226403365385 0.0245848021374603
0.206134364423077 0.0245846192023659
0.211042325480769 0.0245845072644691
0.215950286538462 0.02458371625268
0.220858247596154 0.0245827534021847
0.225766208653846 0.0245825671919396
0.230674169711538 0.0245824759858951
0.235582130769231 0.0245824421116541
0.240490091826923 0.0245822898461965
0.245398052884615 0.0245819676393945
0.250306013942308 0.0245818303755744
0.255213975 0.0245817816662622
};
\addplot [blue, mark=x, mark size=1, mark options={solid}]
table {%
0.0637137527307692 0.999999902197826
0.127427505461538 0.999998096406751
0.191141258192308 0.999739672485998
0.254855010923077 0.877092638816871
0.318568763653846 0.544117246389444
0.382282516384615 0.380879128432801
0.445996269115385 0.306527183209866
0.509710021846154 0.20327166050648
0.573423774576923 0.062835115305778
0.637137527307692 0.0578905877663301
0.700851280038462 0.0581293122082398
0.764565032769231 0.0617066816972242
0.8282787855 0.0606678104391308
0.891992538230769 0.0551143672255638
0.955706290961538 0.0538785419557116
1.01942004369231 0.0518985775944499
1.08313379642308 0.0536788321557763
1.14684754915385 0.0520005633142509
1.21056130188462 0.0539188145554247
1.27427505461538 0.0525884990514139
1.33798880734615 0.0513450646389346
1.40170256007692 0.0511122952901559
1.46541631280769 0.0531726169582854
1.52913006553846 0.0523070271534082
1.59284381826923 0.0517378538171705
1.656557571 0.0511808345760312
1.72027132373077 0.0491302807460466
1.78398507646154 0.0514706026603624
1.84769882919231 0.0486749842174048
1.91141258192308 0.0510512128788085
1.97512633465385 0.0499802256333757
2.03884008738462 0.0491107155364065
2.10255384011538 0.0486285145829427
2.16626759284615 0.0496713530182437
2.22998134557692 0.0485778941528308
2.29369509830769 0.050504054178722
2.35740885103846 0.0481860568898888
2.42112260376923 0.0470651106767992
2.4848363565 0.0498696134404991
2.54855010923077 0.045947330220724
2.61226386196154 0.0472779301039238
2.67597761469231 0.0482805397880083
2.73969136742308 0.0471387704403524
2.80340512015385 0.046325513512222
2.86711887288462 0.0467989699309081
2.93083262561538 0.0461952234928548
2.99454637834615 0.0485414107785436
3.05826013107692 0.0473427618365884
3.12197388380769 0.0448633620726567
3.18568763653846 0.0461685587365122
3.24940138926923 0.0442768845996919
3.313115142 0.0449860634624149
};
\addplot [color0, mark=+, mark size=1, mark options={solid}]
table {%
0.0225986719038462 0.999990629331569
0.0451973438076923 0.999980650264341
0.0677960157115385 0.999966287917756
0.0903946876153846 0.999915628897267
0.112993359519231 0.999813536938502
0.135592031423077 0.999742433162032
0.158190703326923 0.999668270172275
0.180789375230769 0.99955113574439
0.203388047134615 0.999196776249781
0.225986719038462 0.998146337330188
0.248585390942308 0.996749563550029
0.271184062846154 0.995943617879994
0.29378273475 0.994909752608089
0.316381406653846 0.992529946584921
0.338980078557692 0.985124064137647
0.361578750461538 0.97034800427619
0.384177422365385 0.959228931574437
0.406776094269231 0.95106655788659
0.429374766173077 0.937571984570169
0.451973438076923 0.901360561860496
0.474572109980769 0.81820213243109
0.497170781884615 0.732423047342716
0.519769453788462 0.691918911773617
0.542368125692308 0.650668355302099
0.564966797596154 0.567964752729322
0.5875654695 0.415466867037957
0.610164141403846 0.280852244660432
0.632762813307692 0.225824683070263
0.655361485211538 0.198389070770096
0.677960157115385 0.162627283074184
0.700558829019231 0.110890807848994
0.723157500923077 0.0716684680720182
0.745756172826923 0.05600345612996
0.768354844730769 0.0505545204630092
0.790953516634615 0.0464379551037598
0.813552188538462 0.0406874567366583
0.836150860442308 0.0339648990718747
0.858749532346154 0.031127448309264
0.88134820425 0.0291737800131199
0.903946876153846 0.0286271029922099
0.926545548057692 0.0278750499613207
0.949144219961539 0.02662321466013
0.971742891865385 0.0260460038082603
0.994341563769231 0.0254665083255008
1.01694023567308 0.0250690740951322
1.03953890757692 0.0250094737322118
1.06213757948077 0.0248615418567947
1.08473625138462 0.024471039792043
1.10733492328846 0.0245067740943761
1.12993359519231 0.0240684043648063
1.15253226709615 0.0240099135325403
1.175130939 0.0240478680455167
};
\end{axis}

\end{tikzpicture}}
     \vspace{-0.5cm}
     \caption*{(b)}
     \end{subfigure}
     \vspace{0.2cm}
     \caption{NMSE vs time for the online algorithms on the (a) synthetic noiseless matrix; and (b) matrix of noiseless temperature measurements. Each mark denotes 1000 iterations have passed.}
    \label{fig:sg}
\end{figure}

In the online scenario, we compare the (o)RRMCEX algorithm with online (o)ALS and SGD. One observation is revealed per iteration at random, and all three algorithms process a single observation per iteration in a circular fashion. Per realization, we run tests on both synthetic and temperature matrices with $P_s=10\%$, that is, $S=6{,}250$ and $S=5{,}475$ observations for the synthetic and temperature matrices, respectively, for a single realization. 

Fig.~\ref{fig:sg}a depicts the tests for the noiseless synthetic matrix. Clearly, (o)RRMCEX converges much faster than SGD and (o)ALS. Indeed, as opposed to SGD and (o)ALS, which require several passes over the data, (o)RRMCEX approaches the minimum in around $6{,}000$ iterations. Moreover, it achieves the smallest NMSE of 0.0004, which is slightly below the 0.0011 obtained by SGD. Fig.~\ref{fig:sg}b shows the results for the temperature matrix without noise. Again, we observe that (o)RRMCEX converges the fastest to the minimum, whereas SGD requires many passes through the data before it starts descending, while (o)ALS converges much faster than with the synthetic data. Regarding the NMSE, (o)RRMCEX and SGD achieve the same minimum value.

\begin{figure}[t]
    \centering
     {\input{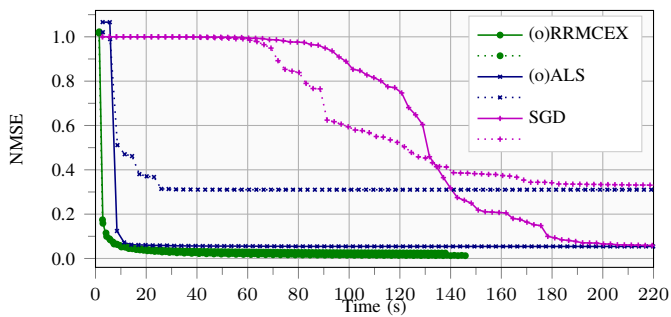}}
     \vspace{-0.5cm}
     \caption{NMSE vs time for the mushroom adjacency matrix. Solid lines denote $S=20{,}000$, and dotted lines denote $S=10{,}000$. Each mark denotes 10000 iterations have passed.}
    \label{fig:mushonline}
\end{figure}
The tests on the Mushroom dataset are run with $S=10{,}000$ $(P_s=0.033\%)$ and $S=20{,}000$ $(P_s=0.036\%)$ observations following the same procedure as with the synthetic and temperature datasets. Fig.~\ref{fig:mushonline} shows results for the Mushroom adjacency matrix with the error for $S=20{,}000$ plotted in solid lines, and for $S=10{,}000$ in dotted lines. We observe that for $S=20{,}000$, (o)RRMCEX crosses the minimum of (o)ALS and SGD in 7 seconds, whereas (o)ALS and SGD converge to this minimum in 12 and 200 seconds, respectively. Afterwards, the line for (o)RRMCEX keeps descending until an error of $0.012$ is reached. For $S=10{,}000$ the convergence time of (o)RRMCEX and SGD remains almost unchanged, whereas for (o)ALS it increases to 26 seconds. Moreover, the error of both (o)ALS and SGD grows much larger, whereas (o)RRMCEX exhibits just a small increase.

\section{Conclusions}
In this paper, we have taken a comprehensive look at MC under the framework of RKHS. We have viewed columns and rows of the data matrix as functions from an RKHS, and leveraged kernel theory to account for the available prior information on the contents of the sought matrix. Moreover, we have developed two estimation algorithms that offer simplicity and speed as their main advantages. When the number of observed data is small, KKMCEX obtains the full matrix estimate by inverting a reduced-size matrix thanks to the Representer Theorem. On the other hand, when the number of observations is too large for KKMCEX to handle, RRMCEX can be employed instead in order to lower the computational cost with no impact on the recovery error when noise is present. In addition, RRMCEX can be easily turned into an online method implemented via SGD iterations. Compared to mainstream methods designed for the factorization-based formulation, namely ALS and SGD, our RRMCEX exhibited improved performance in simulated and real data sets.

Our future research agenda includes improving both KKMCEX and RRMCEX through parallel and accelerated regression methods, as well as designing robust sampling strategies for MCEX formulated as a kernel regression. 
\appendix

\subsection{Proof of Lemma 1}
For the KKMCEX estimator (\ref{eq:zKKMCEX}),  the MSE is given as
\begin{equation}\label{eq:risk}
  MSE := \ex{\norm{\v - \hz_K}} = \ex{\norm{\v - \K_z\hgam}}.
\end{equation}
Plugging the estimator from (\ref{eq:gamest}) into (\ref{eq:risk}) yields 
\begin{align}
  MSE &= \ex{\norm{\v-\K_z \S^T (\S\K_z\S^T + \mu\I)^{-1}(\S\v+\bar{\e})}}\nonumber \\
  &= \norm{(\I - \K_z \S^T (\S\K_z\S^T + \mu\I )^{-1}\S)\v}\nonumber \\ &+ \ex{\norm{\K_z\S^T(\S\K_z\S^T+\mu\I)^{-1}\bar{\e}}} \label{eq:biasvar}
\end{align}
where we have used that $\mathbb{E}\{\e\}=\bm 0$. Further, the first and second terms in (\ref{eq:biasvar}) are the bias and variance of the KKMCEX estimator, respectively. If we substitute $\v=\K_z\gam$ into the first term of (\ref{eq:biasvar}), we obtain
\begin{align}
bias &= \norm{(\I - \K_z\S^T (\S\K_z\S^T + \mu\I)^{-1}\S )\K_z\gam}\nonumber \\
&= \norm{(\K_z - \K_z \S^T (\S\K_z\S^T + \mu\I)^{-1}\S\K_z )\gam}\nonumber \\
& = \norm{(\K_z-\tilde{\T}_z)\gam}\label{eq:bias}
\end{align}
where $\tilde{\T}_z$ is the regularized Nyström approximation of $\K_z$ in~(\ref{eq:nystreg}).
On the other hand, the variance term is
\begin{align}
var &= \ex{\norm{\K_z\S^T(\S\K_z\S^T+\mu\I)^{-1}\bar{\e}}}\nonumber \\
&\mspace{-25mu}= \mathbb{E}\{{1\over\mu^2}\left|\left|\K_z\S^T(\S\K_z\S^T+\mu\I)^{-1}\right.\right.\nonumber\\&\left.\left. \:\:\:(\mu\I+\S\K_z\S^T-\S\K_z\S^T)\bar{\e}\right|\right|_2^2\}\nonumber \\
&\mspace{-25mu}= \ex{{1\over\mu^2}\norm{\K_z\S^T - \K_z\S^T(\S\K_z\S^T+\mu\I)^{-1}\S\K_z\S^T\bar{\e}}}\nonumber\\
&\mspace{-25mu}= \ex{{1\over\mu^2}\norm{(\K_z - \K_z\S^T(\S\K_z\S^T+\mu\I)^{-1}\S\K_z)\S^T\bar{\e}}}\nonumber\\
&\mspace{-25mu}= \ex{{1\over\mu^2}\norm{(\K_z - \tilde{\T}_z)\S^T\bar{\e}}}.\label{eq:quad}
\end{align}
Adding the two terms in~\eqref{eq:bias} and~\eqref{eq:quad}, we obtain the MSE in~\eqref{eq:lem1}.

\subsection{Proof of Theorem 2}
Since $\K_z - \tilde{\T}_z$ appears in the bias and variance terms in Lemma 1, we will first derive an upper bound on its eigenvalues that will eventually lead us to a bound on the MSE. To this end, we will need a couple of lemmas.

\begin{lemma}\label{lem:sim}
Given a symmetric matrix $\A\in\mathbb{R}^{N\times N}$ and a symmetric nonsingular matrix $\B\in\mathbb{R}^{N\times N}$, it holds that
$\lambda_k(\A\B) = \lambda_k(\B^{1\over 2}\A\B^{1\over 2})$; and also $\lambda_k(\A\B) \leq \lambda_k(\A)\lambda_N(\B)$.
\end{lemma}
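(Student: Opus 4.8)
The plan is to prove the two claims separately, starting from the remark that a symmetric \emph{nonsingular} $\B$ of this kind must in fact be positive definite for its symmetric square root $\B^{1/2}$ to exist; this is exactly the relevant case, since $\B$ plays the role of a nonsingular kernel matrix. I will use $\B=\B^{1/2}\B^{1/2}$ with $\B^{1/2}$ symmetric and invertible throughout.

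For the equality, I would exhibit $\A\B$ and $\B^{1/2}\A\B^{1/2}$ as similar matrices. Conjugating by $\B^{1/2}$ gives $\B^{1/2}(\A\B)\B^{-1/2}=\B^{1/2}\A\B\B^{-1/2}=\B^{1/2}\A\B^{1/2}$, where I used $\B\B^{-1/2}=\B^{1/2}$. Since a similarity transformation preserves the spectrum, and since $\B^{1/2}\A\B^{1/2}$ is symmetric (hence has real eigenvalues and a well-defined ascending ordering), it follows immediately that $\lambda_k(\A\B)=\lambda_k(\B^{1/2}\A\B^{1/2})$ for every $k$. A useful by-product is that $\A\B$ has real eigenvalues even though it need not itself be symmetric, which is what makes $\lambda_k(\A\B)$ meaningful.

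For the inequality I would invoke the Courant--Fischer min-max characterization in ascending form, namely $\lambda_k=\min_{\dim S=k}\max_{0\neq x\in S}(x^T(\cdot)x)/(x^Tx)$, applied to the symmetric matrix $\B^{1/2}\A\B^{1/2}$. The substitution $y=\B^{1/2}x$ is a bijection of the $k$-dimensional subspaces onto themselves and turns the Rayleigh quotient into the generalized quotient $(y^T\A y)/(y^T\B^{-1}y)$. This is where positive semidefiniteness of $\A$ enters and is genuinely needed (a diagonal indefinite example, e.g. $\A=\mathrm{diag}(-1,2)$, $\B=\mathrm{diag}(1,3)$, shows the bound can fail otherwise; in the paper's use $\A=\K_z-\tilde{\T}_z\succeq 0$): because $y^T\A y\ge 0$ and $y^T\B^{-1}y\ge \lambda_N(\B)^{-1}\,y^Ty$, I can factor $(y^T\A y)/(y^T\B^{-1}y)=\bigl[(y^T\A y)/(y^Ty)\bigr]\cdot\bigl[(y^Ty)/(y^T\B^{-1}y)\bigr]\le \lambda_N(\B)\,(y^T\A y)/(y^Ty)$. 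Pulling the constant $\lambda_N(\B)$ through the min and the max and recognizing the remaining expression as $\lambda_k(\A)$ yields $\lambda_k(\A\B)=\lambda_k(\B^{1/2}\A\B^{1/2})\le \lambda_N(\B)\,\lambda_k(\A)$.

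The step requiring the most care is the inequality rather than the similarity: I must (i) justify that $y=\B^{1/2}x$ carries $k$-dimensional subspaces bijectively onto $k$-dimensional subspaces so that the min-max value is unchanged, and (ii) keep the sign condition $y^T\A y\ge 0$ in view, since without it multiplying by the upper bound $\lambda_N(\B)$ on $(y^Ty)/(y^T\B^{-1}y)$ would be invalid. I therefore expect the cleanest write-up to state the positive-semidefinite hypothesis on $\A$ explicitly (as holds for $\K_z-\tilde{\T}_z$) and to isolate the elementary bound $y^T\B^{-1}y\ge y^Ty/\lambda_N(\B)$ as the single spectral fact about $\B$ that is used.
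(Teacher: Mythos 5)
Your proposal is correct and follows essentially the same route as the paper: the similarity $\B^{1/2}(\A\B)\B^{-1/2}=\B^{1/2}\A\B^{1/2}$ for the equality, and the Courant--Fischer min-max characterization with the Rayleigh quotient factored into $\frac{\x^T\B^{1/2}\A\B^{1/2}\x}{\x^T\B\x}\cdot\frac{\x^T\B\x}{\x^T\x}$ for the inequality. Your added observations --- that $\B$ must be positive definite for $\B^{1/2}$ to exist and, more importantly, that the inequality genuinely requires $\A\succeq 0$ (your counterexample $\A=\mathrm{diag}(-1,2)$, $\B=\mathrm{diag}(1,3)$ is valid) --- identify hypotheses the paper leaves implicit but which do hold in its applications, where $\A$ is $(\I-\P)^{-1}$ or $\Q_z^T(\I-\S^T\S)\Q_z$.
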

\begin{proof}
Since $\B$ is invertible and symmetric, we can write $\A\B = \B^{-{1\over 2}}(\B^{1\over 2}\A\B^{1\over 2})\B^{1\over 2}$.
Therefore, $\A\B$ is similar to $\B^{{1\over 2}}\A\B^{1\over 2}$, and they both share the same eigenvalues.
Let $\mathcal{U}\subset\mathbb{R}^{N}\setminus\{\bm 0\}$. From the min-max theorem~\cite{lax}, the $k^{th}$ eigenvalue of $\A$ satisfies
\begin{equation}
    \lambda_k(\A) = \min_{\mathcal{U}} \left\{ \max_{\x\in \mathcal{U}} \frac{\x^T\A\x}{\x^T\x} \:|\: \text{dim}(\mathcal{U})=k \right\}.
\end{equation}
Therefore, we have
\begin{align}
    \lambda_k(\A\B) &= \lambda_k(\B^{1\over 2}\A\B^{1\over 2}) \nonumber\\&\mspace{-50mu}=
      \min_{\mathcal{U}} \left\{ \max_{\x\in \mathcal{U}} \frac{\x^T\B^{1\over 2}\A\B^{1\over 2}\x}{\x^T\x} \:|\: \text{dim}(\mathcal{U})=k \right\} \nonumber\\
    &\mspace{-50mu}= \min_{\mathcal{U}} \left\{ \max_{\x\in \mathcal{U}} \frac{\x^T\B^{1\over 2}\A\B^{1\over 2}\x}{\x^T\B^{1\over 2}\B^{1\over 2}\x} \frac{\x^T\B\x}{\x^T\x} \:|\: \text{dim}(\mathcal{U})=k \right\}\nonumber \\
    &\mspace{-50mu}\leq \min_{\mathcal{U}} \left\{ \max_{\x\in \mathcal{U}} \frac{\x^T\A\x}{\x^T\x} \:|\: \text{dim}(\mathcal{U})=k \right\}\lambda_N(\B)\nonumber \\
    &\mspace{-50mu}= \lambda_k(\A)\lambda_N(\B).
\end{align}
\end{proof}
The following lemma bounds the eigenvalues of $\K_z - \tilde{\T}_z$, and the regularized Nystrom approximation $\tilde{\T}_z$ in~(\ref{eq:nystreg}).
\begin{lemma}\label{lem:nyst} With $\K_z$ as in~\eqref{eq:Kz} and $\tilde{\T}_z$ as in~\eqref{eq:nystreg}, the eigenvalues of $\K_z-\tilde{\T}_z$ are bounded as
\begin{equation}
    \K_z - \tilde{\T}_z \preceq \frac{\mu\sigma_{NL}}{\sigma_{NL}+\mu}\I_S' + \sigma_{NL}\I_S
\end{equation}
where $\sigma_{NL}$ is the largest eigenvalue of $\K_z$, $\I_S := \text{diag}([0,\allowbreak 0,\ldots,1,1])$ has $S$ zeros on its diagonal, and $\I'_S := \I-\I_S$.
\end{lemma}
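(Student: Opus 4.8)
The plan is to read the displayed matrix inequality as a bound on the ordered eigenvalues of $\K_z-\tilde{\T}_z$, which is exactly what Theorem~\ref{th:mse} uses. Writing them ascending as $\lambda_1\le\cdots\le\lambda_{NL}$, the right-hand side is a diagonal matrix whose eigenvalues are $\tfrac{\mu\sigma_{NL}}{\sigma_{NL}+\mu}$ with multiplicity $S$ and $\sigma_{NL}$ with multiplicity $NL-S$; since $\tfrac{\mu\sigma_{NL}}{\sigma_{NL}+\mu}<\sigma_{NL}$, it therefore suffices to prove the two bounds $\lambda_S(\K_z-\tilde{\T}_z)\le\tfrac{\mu\sigma_{NL}}{\sigma_{NL}+\mu}$ and $\lambda_{NL}(\K_z-\tilde{\T}_z)\le\sigma_{NL}$.

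The second bound is immediate. Since $\mu>0$ makes $(\S\K_z\S^T+\mu\I)^{-1}$ positive definite, we have $\tilde{\T}_z=(\S\K_z)^T(\S\K_z\S^T+\mu\I)^{-1}(\S\K_z)\succeq\bm 0$, so $\K_z-\tilde{\T}_z\preceq\K_z\preceq\sigma_{NL}\I$ and every eigenvalue is at most $\sigma_{NL}$.

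For the first bound I would invoke the min-max theorem used in Lemma~\ref{lem:sim}, testing it on the $S$-dimensional subspace $\mathcal{U}=\text{range}(\S^T)$, which has dimension $S$ because the sampling matrix obeys $\S\S^T=\I$. Any $\x\in\mathcal{U}$ is $\x=\S^T\u$ with $\|\x\|^2=\u^T\S\S^T\u=\|\u\|^2$, so the Rayleigh quotient on $\mathcal{U}$ is controlled by the compression $\S(\K_z-\tilde{\T}_z)\S^T$. Setting $\A:=\S\K_z\S^T$, a short computation gives
\begin{align}
\S(\K_z-\tilde{\T}_z)\S^T &= \A-\A(\A+\mu\I)^{-1}\A \nonumber\\ &= \mu\A(\A+\mu\I)^{-1},
\end{align}
whose eigenvalues are $\mu a_i/(a_i+\mu)$, with $a_i$ the eigenvalues of $\A$. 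Because $t\mapsto\mu t/(t+\mu)$ is increasing and $\lambda_{\max}(\A)\le\sigma_{NL}$ (for any unit $\u$, $\u^T\A\u=(\S^T\u)^T\K_z(\S^T\u)\le\sigma_{NL}\|\S^T\u\|^2=\sigma_{NL}$), each such eigenvalue is at most $\tfrac{\mu\sigma_{NL}}{\sigma_{NL}+\mu}$. Hence $\max_{\x\in\mathcal{U}}\x^T(\K_z-\tilde{\T}_z)\x/\|\x\|^2\le\tfrac{\mu\sigma_{NL}}{\sigma_{NL}+\mu}$, and the min-max characterization of $\lambda_S$ delivers $\lambda_S(\K_z-\tilde{\T}_z)\le\tfrac{\mu\sigma_{NL}}{\sigma_{NL}+\mu}$, so the $S$ smallest eigenvalues obey this bound. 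Combining the two parts gives $\lambda_k(\K_z-\tilde{\T}_z)\le\lambda_k\big(\tfrac{\mu\sigma_{NL}}{\sigma_{NL}+\mu}\I_S'+\sigma_{NL}\I_S\big)$ for all $k$, which is the claim.

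I expect the main obstacle to be choosing the right certificate. The tempting route, splitting $\K_z-\tilde{\T}_z$ through the SVD of $\K_z^{1/2}\S^T$ into a rank-$S$ term plus a rank-$(NL-S)$ term and bounding each by Weyl's inequality, fails: the rank-$S$ term can have largest eigenvalue strictly above $\tfrac{\mu\sigma_{NL}}{\sigma_{NL}+\mu}$, so it cannot certify the $S$ small eigenvalues. The sharp factor $\tfrac{\mu\sigma_{NL}}{\sigma_{NL}+\mu}$ only emerges after restricting the quadratic form to $\text{range}(\S^T)$, where the compression collapses to $\mu\A(\A+\mu\I)^{-1}$ and the monotonicity of $t\mapsto\mu t/(t+\mu)$ together with $\lambda_{\max}(\A)\le\sigma_{NL}$ can be applied.
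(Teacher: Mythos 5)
Your proof is correct, and it reaches the lemma by a genuinely different and more economical route than the paper. The paper's proof applies the matrix inversion lemma twice to rewrite $\K_z-\tilde{\T}_z$ in the resolvent form $\mu\Q_z\Sig_z^{1/2}(\Sig_z+\mu\I)^{-1/2}(\I-\P)^{-1}(\Sig_z+\mu\I)^{-1/2}\Sig_z^{1/2}\Q_z^T$, then bounds the eigenvalues of the auxiliary matrix $\P$ through a separate similarity/min--max lemma ($\lambda_k(\A\B)\le\lambda_k(\A)\lambda_N(\B)$) together with the observation that $\Q_z^T(\I-\S^T\S)\Q_z$ is a rank-$(NL-S)$ projection, and finally inverts the resulting bound on $\I-\P$. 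You instead split the claim into its two eigenvalue blocks: the top block follows in one line from $\tilde{\T}_z\succeq\bm 0$, and the bottom $S$ eigenvalues follow from Courant--Fischer tested on $\mathrm{range}(\S^T)$, where the compression collapses to $\mu\A(\A+\mu\I)^{-1}$ with $\A=\S\K_z\S^T$ and the factor $\mu\sigma_{NL}/(\sigma_{NL}+\mu)$ appears directly from the monotonicity of $t\mapsto\mu t/(t+\mu)$ and $\lambda_{\max}(\A)\le\sigma_{NL}$. Your argument is shorter, avoids the paper's auxiliary Lemma~\ref{lem:sim} entirely, and makes transparent where the $\mu\sigma_{NL}/(\sigma_{NL}+\mu)$ factor comes from; what the paper's longer computation buys is the explicit representation \eqref{eq:kzp}, which as a byproduct certifies $\K_z-\tilde{\T}_z\succeq\bm 0$ (implicitly used when Theorem~\ref{th:mse} squares the eigenvalues), whereas your argument only delivers upper bounds. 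Note also that both proofs really establish domination of the \emph{ordered} eigenvalues rather than a Loewner inequality against that particular diagonal matrix, which is the reading Theorem~\ref{th:mse} requires, and you state this interpretation explicitly where the paper leaves it implicit.
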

\begin{proof}

Using the eigendecomposition $\K_z=\Q_z\Sig_z\Q_z^T$, we can write 
\begin{align}
    \K_z-\tilde{\T}_z &= \K_z - \K_z\S^T(\S\K_z\S^T+\mu\I)^{-1}\S\K_z \nonumber \\
    &= \Q_z\Sig_z^{1\over 2}\left[\I-\Sig_z^{1\over 2}
    \Q_z^T\S^T(\S\Q_z\Sig_z^{1\over 2}\Sig_z^{1\over 2}
    \Q_z^T\S^T \right.\nonumber \\&\left.\:\:\:\:+\mu \I)^{-1}\S\Q_z\Sig_z^{1\over 2}\right]
    \Sig_z^{1\over 2}\Q_z^T.\label{eq:midmat}
\end{align}
Applying the MIL to the matrix inside the square brackets of~\eqref{eq:midmat}, we arrive at
\begin{align}
    &\I-\Sig_z^{1\over 2}
    \Q_z^T\S^T(\S\Q_z\Sig_z^{1\over 2}\Sig_z^{1\over 2}
    \Q_z^T\S^T+\mu \I)^{-1}\S\Q_z\Sig_z^{1\over 2} \nonumber\\&\:\:= (\I + {1\over\mu}\Sig_z^{1\over 2}\Q_z^T\S^T\S\Q_z\Sig_z^{1\over 2})^{-1}.
\end{align}
That in turn implies
\begin{align}\label{eq:kzp}
    &\K_z-\tilde{\T}_z = \mu\Q_z\Sig_z^{1\over 2}(\mu\I + \Sig_z^{1\over 2}\Q_z^T\S^T\S\Q_z\Sig_z^{1\over 2})^{-1}\Sig_z^{1\over 2}\Q_z^T \nonumber \\
    &= \mu\Q_z\Sig_z^{1\over 2}(\Sig_z + \mu\I  - \Sig_z + \Sig_z^{1\over 2}\Q_z^T\S^T\S\Q_z\Sig_z^{1\over 2})^{-1}\Sig_z^{1\over 2}\Q_z^T \nonumber \\
  &= \mu\Q_z\Sig_z^{1\over 2}\left[(\Sig_z+\mu\I)^{1\over2}\left(\I -  (\Sig_z+\mu\I)^{-{1\over2}}\Sig_z(\Sig_z+\mu\I)^{-{1\over 2}}\right.\right.\nonumber\\& \left.\left. \:\:\:\:+ (\Sig_z+\mu\I)^{-{ 1\over 2}}\Sig_z^{1\over 2}\Q_z^T\S^T\S\Q_z\Sig_z^{1\over 2}(\Sig_z+\mu\I)^{-{1\over 2}})\right)\right.\nonumber \\&\left. \:\:\:\:(\Sig_z+\mu\I)^{1\over2}\right]^{-1}\Sig_z^{1\over 2}\Q_z^T\nonumber \\
  &= \mu\Q_z\Sig_z^{1\over 2}(\Sig_z+ \mu\I)^{-{1\over2}}(\I-\P)^{-1}(\Sig_z+ \mu\I)^{- {1\over2}}\Sig_z^{1\over 2}\Q_z^T 
\end{align}
where
\begin{align}\label{eq:P}
    \P =&\Sig_z(\Sig_z+\mu\I)^{-1} \\&- (\Sig_z+\mu\I)^{-{ 1\over 2}}\Sig^{1\over 2}\Q_z^T\S^T\S\Q_z\Sig^{1\over 2}(\Sig_z+\mu\I)^{-{1\over 2}}.\nonumber 
\end{align}
Regarding the eigenvalues of $\K_z-\tilde{\T}_z$ in~\eqref{eq:kzp}, we can bound them as
\begin{align}
    &\lambda(\K_z-\tilde{\T}_z) \nonumber\\&= \mu\,\lambda(\Q_z\Sig_z^{1\over 2}(\Sig_z+ \mu\I)^{-{1\over2}}(\I-\P)^{-1}(\Sig_z+ \mu\I)^{- {1\over2}}\Sig_z^{1\over 2}\Q_z^T)\nonumber\\
     &= \mu\,\lambda(\Sig_z^{1\over 2}(\Sig_z+ \mu\I)^{-{1\over2}}(\I-\P)^{-1}(\Sig_z+ \mu\I)^{- {1\over2}}\Sig_z^{1\over 2})\nonumber\\
     &= \mu\,\lambda((\I-\P)^{-1}(\Sig_z+ \mu\I)^{-1}\Sig_z)\nonumber\\
     &\leq\frac{\mu\sigma_{NL}}{\sigma_{NL}+\mu}\lambda((\I-\P)^{-1})\label{eq:kzbound1}
 \end{align}
where $\lambda(\cdot)$ denotes the eigenvalues of a matrix, and we have applied Lemma~\ref{lem:sim} on the third equality and the last inequality. Knowing that $\lambda(\I-\P)=\I-\lambda(\P)$ we can now bound the eigenvalues of $\P$ as
\begin{align}\label{eq:C2}
    \lambda(\P) &\!=\!\lambda(\Sig_z^{1\over 2}(\Sig_z\!+\!\mu\I)^{- {1\over 2}}\Q_z^T(\I \!-\! \S^T\S)\Q_z(\Sig_z\!+\!\mu\I)^{- {1\over 2}}\Sig_z^{1\over 2})\nonumber\\
 &=\lambda(\Q_z^T(\I - \S^T\S)\Q_z(\Sig_z+\mu\I)^{- {1}}\Sig_z)\nonumber\\
 &\leq \frac{\sigma_{NL}}{\sigma_{NL}+\mu} \lambda(\Q_z^T(\I - \S^T\S)\Q_z)\nonumber\\
 &= \frac{\sigma_{NL}}{\sigma_{NL}+\mu}\lambda(\I_S)
\end{align}
where we have applied Lemma~\ref{lem:sim} on the second and third inequalities, and $\I_S:=\text{diag}{[0,0,\ldots,1,1]}$ has $S$ zeros on its diagonal. Next, we have that $\I-\P\succeq \I - \frac{\sigma_{NL}}{\sigma_{NL}+\mu}\I_S$, and thus
\begin{equation}\label{eq:iqbound}
(\I-\P)^{-1}\preceq  \I_S' + \frac{\sigma_{NL}+\mu}{\mu}\I_S
\end{equation}
where $\I_S':=\I-\I_S$. Finally, combining~\eqref{eq:iqbound} with~\eqref{eq:kzbound1} yields
\begin{equation}\label{eq:kt}
    \K_z - \tilde{\T}_z \preceq \frac{\mu\sigma_{NL}}{\sigma_{NL}+\mu}\I_S' + \sigma_{NL}\I_S
\end{equation}
which concludes the proof.
\end{proof}
Using Lemmas~\ref{lem:sim} and~\ref{lem:nyst}, we can proceed to establish a bound on the bias and variance. Considering the eigendecomposition $\K_z-\tilde{\T}_z=\L\bm\Lambda\L^T$,  we can write the bias in (\ref{eq:bias}) as
\begin{align}
bias &= \norm{\L\bm\Lambda\L^T\gam}.
\end{align}
With $\tilde{\gam}:=\L^T\gam$, and using Lemma~\ref{lem:nyst} the bias is bounded as
\begin{align}
bias
 &  = \norm{\L\bm\Lambda\tilde{\gam}}
   = \tilde{\gam}^T\bm\Lambda^2\tilde{\gam}\nonumber\\
&\leq \frac{\mu^2\sigma_{NL}^2}{(\sigma_{NL}+\mu)^2}\tilde{\gam}^T\I_S'\tilde{\gam} + \sigma_{NL}^2\tilde{\gam}^T\I_S\tilde{\gam}\nonumber\\
&= \frac{\mu^2\sigma_{NL}^2}{(\sigma_{NL}+\mu)^2}\sum _{i=1}^{S}\tilde{\gam_i}^2 + \sigma^2_{NL}\sum_{i=S+1}^{NL}\tilde{\gam}_i^2.\label{eq:biasbound}
\end{align}

To bound the variance in (\ref{eq:quad}), recall that $\e$ is a Gaussian random vector with covariance matrix $\nu^2\I$, while $\bar{\e}$ has covariance matrix $\nu^2\S\S^T$. Then (\ref{eq:quad}) is a quadratic form in $\e$, whose the variance becomes
\begin{align}\label{eq:var1}
    var &=\ex{{1\over\mu^2}\norm{(\K_z - \tilde{\T}_z)\S^T\bar{\e}}} \nonumber\\&= {\nu^2\over \mu^2}\trace{\S(\K_z - \tilde{\T}_z)^2\S^T}\nonumber\\&= {\nu^2\over \mu^2}\trace{(\K_z - \tilde{\T}_z)^2\S^T\S}.
\end{align}
The matrix inside the trace in~\eqref{eq:var1} has $NL-S$ zero entries in its diagonal. Lemma~\ref{lem:nyst}, on the other hand, implies that diagonal entries of $\K_z-\tilde{\T}_z$ are smaller than its largest eigenvalue; that is, $\left[\K_z-\tilde{\T}_z\right]_{i,i}\leq\sigma_{NL}$. Coupling this with~\eqref{eq:var1} yields
\begin{align}
var
    \leq \frac{S\nu^2 \sigma_{NL}^2}{\mu^2}.\label{eq:varbound}
\end{align}
 Finally, combining the bias bound in (\ref{eq:biasbound}) with the variance bound in (\ref{eq:varbound}), yields the bound for the MSE as
\begin{equation}
MSE \leq
\frac{\mu^2\sigma_{NL}^2}{(\sigma_{NL}+\mu)^2}\sum _{i=1}^{S}\tilde{\gam_i}^2 + \sigma^2_{NL}\sum_{i=S+1}^{NL}\tilde{\gam}_i^2+\frac{S\nu^2 \sigma_{NL}^2}{\mu^2}.
\end{equation}



{\bibliographystyle{IEEEtran}

\begin{thebibliography}{10}
\providecommand{\url}[1]{#1}
\csname url@samestyle\endcsname
\providecommand{\newblock}{\relax}
\providecommand{\bibinfo}[2]{#2}
\providecommand{\BIBentrySTDinterwordspacing}{\spaceskip=0pt\relax}
\providecommand{\BIBentryALTinterwordstretchfactor}{4}
\providecommand{\BIBentryALTinterwordspacing}{\spaceskip=\fontdimen2\font plus
\BIBentryALTinterwordstretchfactor\fontdimen3\font minus
  \fontdimen4\font\relax}
\providecommand{\BIBforeignlanguage}[2]{{%
\expandafter\ifx\csname l@#1\endcsname\relax
\typeout{** WARNING: IEEEtran.bst: No hyphenation pattern has been}%
\typeout{** loaded for the language `#1'. Using the pattern for}%
\typeout{** the default language instead.}%
\else
\language=\csname l@#1\endcsname
\fi
#2}}
\providecommand{\BIBdecl}{\relax}
\BIBdecl

\bibitem{candes}
E.~J. Cand{\`e}s and B.~Recht, ``{Exact matrix completion via convex
  optimization},'' \emph{Foundations of Computational Mathematics}, vol.~9,
  no.~6, pp. 717--772, Dec. 2009.

\bibitem{ji2010robust}
H.~Ji, C.~Liu, Z.~Shen, and Y.~Xu, ``{Robust video denoising using low rank
  matrix completion},'' in \emph{Proc. of Computer Vision and Pattern
  Recognition Conf.}, San Francisco, USA, Jun. 2010, pp. 1791--1798.

\bibitem{yi2015partial}
K.~Yi, J.~Wan, T.~Bao, and L.~Yao, ``{A DCT regularized matrix completion
  algorithm for energy efficient data gathering in wireless sensor networks},''
  \emph{Int. Journal of Distributed Sensor Networks}, vol.~11, no.~7, p.
  272761, Jul. 2015.

\bibitem{koren2009matrix}
Y.~Koren, R.~Bell, and C.~Volinsky, ``{Matrix factorization techniques for
  recommender systems},'' \emph{Computer}, vol.~42, no.~8, pp. 30--37, Aug.
  2009.

\bibitem{candes2010matrix}
E.~J. Candes and Y.~Plan, ``{Matrix completion with noise},'' \emph{Proceedings
  of the IEEE}, vol.~98, no.~6, pp. 925--936, Jun. 2010.

\bibitem{cheng2013stcdg}
J.~Cheng, Q.~Ye, H.~Jiang, D.~Wang, and C.~Wang, ``{STCDG: an efficient data
  gathering algorithm based on matrix completion for wireless sensor
  networks},'' \emph{IEEE Transactions on Wireless Communications}, vol.~12,
  no.~2, pp. 850--861, Feb. 2013.

\bibitem{kalofolias2014matrix}
V.~Kalofolias, X.~Bresson, M.~Bronstein, and P.~Vandergheynst, ``Matrix
  completion on graphs,'' in \emph{{Neural Information Processing Systems
  Workshop ``Out of the Box: Robustness in High Dimension"}}, Montreal, Canada,
  Dec. 2014.

\bibitem{chen}
S.~Chen, A.~Sandryhaila, J.~M. Moura, and J.~Kovacevi{\'c}, ``Signal recovery
  on graphs: Variation minimization,'' \emph{IEEE Transactions on Signal
  Processing}, vol.~63, no.~17, pp. 4609--4624, Sep. 2015.

\bibitem{rao2015collaborative}
N.~Rao, H.-F. Yu, P.~K. Ravikumar, and I.~S. Dhillon, ``Collaborative filtering
  with graph information: Consistency and scalable methods,'' in
  \emph{{Advances in Neural Information Processing Systems}}, Montreal, Canada,
  Dec. 2015, pp. 2107--2115.

\bibitem{ma2011recommender}
H.~Ma, D.~Zhou, C.~Liu, M.~R. Lyu, and I.~King, ``{Recommender systems with
  social regularization},'' in \emph{{Proc. of ACM Int. Conf. on Web Search and
  Data Mining}}, Hong Kong, Feb. 2011, pp. 287--296.

\bibitem{abernethy2006low}
J.~Abernethy, F.~Bach, T.~Evgeniou, and J.-P. Vert, ``{Low-rank matrix
  factorization with attributes},'' Ecole des mines de Paris, Tech. Rep., Sept.
  2006.

\bibitem{bazerque2013}
J.~A. Bazerque and G.~B. Giannakis, ``{Nonparametric basis pursuit via sparse
  kernel-based learning: A unifying view with advances in blind methods},''
  \emph{IEEE Signal Processing Magazine}, vol.~30, no.~4, pp. 112--125, Jul.
  2013.

\bibitem{zhou2012}
T.~Zhou, H.~Shan, A.~Banerjee, and G.~Sapiro, ``{Kernelized probabilistic
  matrix factorization: Exploiting graphs and side information},'' in
  \emph{{Proc. of SIAM Int. Conf. on Data Mining}}, Minneapolis, USA, Jul.
  2012, pp. 403--414.

\bibitem{stock2018comparative}
M.~Stock, T.~Pahikkala, A.~Airola, B.~De~Baets, and W.~Waegeman, ``A
  comparative study of pairwise learning methods based on kernel ridge
  regression,'' \emph{{Neural Computation}}, vol.~30, no.~8, pp. 2245--2283,
  2018.

\bibitem{ma2011fixed}
S.~Ma, D.~Goldfarb, and L.~Chen, ``{Fixed point and Bregman iterative methods
  for matrix rank minimization},'' \emph{Mathematical Programming}, vol. 128,
  no. 1-2, pp. 321--353, Jun. 2011.

\bibitem{cai2010singular}
J.~F. Cai, E.~J. Cand{\`e}s, and Z.~Shen, ``{A singular value thresholding
  algorithm for matrix completion},'' \emph{SIAM Journal on Optimization},
  vol.~20, no.~4, pp. 1956--1982, Jan. 2010.

\bibitem{gimenez}
P.~Gim\'{e}nez-Febrer and A.~Pag\`{e}s-Zamora, ``Matrix completion of noisy
  graph signals via proximal gradient minimization,'' in \emph{Proc. of IEEE
  Int. Conf. on Acoustics, Speech and Signal Processing}, New Orleans, USA,
  March 2017, pp. 4441--4445.

\bibitem{hastie2015matrix}
T.~Hastie, R.~Mazumder, J.~D. Lee, and R.~Zadeh, ``{Matrix completion and
  low-rank SVD via fast alternating least squares},'' \emph{Journal of Machine
  Learning Research}, vol.~16, pp. 3367--3402, Jan. 2015.

\bibitem{jain2013low}
P.~Jain, P.~Netrapalli, and S.~Sanghavi, ``{Low-rank matrix completion using
  alternating minimization},'' in \emph{{Proc. of ACM Symp. on Theory of
  Computing}}, Palo Alto, USA, Jun. 2013, pp. 665--674.

\bibitem{gemulla2011large}
R.~Gemulla, E.~Nijkamp, P.~J. Haas, and Y.~Sismanis, ``{Large-scale matrix
  factorization with distributed stochastic gradient descent},'' in
  \emph{{Proc. of ACM SIGKDD Int. Conf. on Knowledge Discovery and Data
  Mining}}, San Diego, USA, Aug. 2011, pp. 69--77.

\bibitem{shawe}
J.~Shawe-Taylor and N.~Cristianini, \emph{{Kernel Methods for Pattern
  Analysis}}.\hskip 1em plus 0.5em minus 0.4em\relax Cambridge University
  Press, 2004.

\bibitem{friedman2001elements}
J.~Friedman, T.~Hastie, and R.~Tibshirani, \emph{{The Elements of Statistical
  Learning}}.\hskip 1em plus 0.5em minus 0.4em\relax Springer Series in
  Statistics, 2001.

\bibitem{romero}
D.~Romero, M.~Ma, and G.~B. Giannakis, ``{Kernel-based reconstruction of graph
  signals},'' \emph{IEEE Transactions on Signal Processing}, vol.~65, no.~3,
  pp. 764--778, Feb. 2017.

\bibitem{srebro2005maximum}
N.~Srebro, J.~Rennie, and T.~S. Jaakkola, ``{Maximum-margin matrix
  factorization},'' in \emph{{Advances in Neural Information Processing
  Systems}}, Vancouver, Canada, Dec. 2005, pp. 1329--1336.

\bibitem{teflioudi2012}
C.~Teflioudi, F.~Makari, and R.~Gemulla, ``{Distributed matrix completion},''
  in \emph{{Proc. of Int. Conf. on Data Mining}}, Brussels, Belgium, Dec. 2012,
  pp. 655--664.

\bibitem{mardani2015}
M.~Mardani, G.~Mateos, and G.~B. Giannakis, ``Subspace learning and imputation
  for streaming big data matrices and tensors,'' \emph{IEEE Transactions on
  Signal Processing}, vol.~63, no.~10, pp. 2663--2677, May 2015.

\bibitem{scholkopf2001generalized}
B.~Sch{\"o}lkopf, R.~Herbrich, and A.~J. Smola, ``A generalized representer
  theorem,'' in \emph{International conference on computational learning
  theory}.\hskip 1em plus 0.5em minus 0.4em\relax Springer, 2001, pp. 416--426.

\bibitem{henderson1981}
H.~V. Henderson and S.~R. Searle, ``On deriving the inverse of a sum of
  matrices,'' \emph{Siam Review}, vol.~23, no.~1, pp. 53--60, 1981.

\bibitem{pahikkala2014two}
T.~Pahikkala, M.~Stock, A.~Airola, T.~Aittokallio, B.~De~Baets, and
  W.~Waegeman, ``A two-step learning approach for solving full and almost full
  cold start problems in dyadic prediction,'' in \emph{{Joint European Conf. on
  Machine Learning and Knowledge Discovery in Databases}}.\hskip 1em plus 0.5em
  minus 0.4em\relax Springer, 2014, pp. 517--532.

\bibitem{drineas2005nystrom}
P.~Drineas and M.~W. Mahoney, ``{On the Nystr{\"o}m method for approximating a
  Gram matrix for improved kernel-based learning},'' \emph{Journal of Machine
  Learning Research}, vol.~6, pp. 2153--2175, Dec. 2005.

\bibitem{alaoui2015fast}
A.~Alaoui and M.~W. Mahoney, ``{Fast randomized kernel ridge regression with
  statistical guarantees},'' in \emph{{Advances in Neural Information
  Processing Systems}}, Montreal, Canada, Dec. 2015, pp. 775--783.

\bibitem{yang2015randomized}
Y.~Yang, M.~Pilanci, and M.~J. Wainwright, ``{Randomized sketches for kernels:
  Fast and optimal non-parametric regression},'' \emph{The Annals of
  Statistics}, vol.~45, no.~3, pp. 991--1023, Jun. 2017.

\bibitem{avron2017faster}
H.~Avron, K.~L. Clarkson, and D.~P. Woodruff, ``{Faster kernel ridge regression
  using sketching and preconditioning},'' \emph{SIAM Journal on Matrix Analysis
  and Applications}, vol.~38, no.~4, pp. 1116--1138, Jan. 2017.

\bibitem{van2014online}
S.~Van~Vaerenbergh and I.~Santamar{\'\i}a, ``{Online Regression with
  Kernels},'' in \emph{Regularization, Optimization, Kernels, and Support
  Vector Machines}.\hskip 1em plus 0.5em minus 0.4em\relax {Chapman and
  Hall/CRC}, 2014, ch.~21, pp. 477--501.

\bibitem{lu2016large}
J.~Lu, S.~C. Hoi, J.~Wang, P.~Zhao, and Z.-Y. Liu, ``{Large scale online kernel
  learning},'' \emph{Journal of Machine Learning Research}, vol.~17, no.~47,
  pp. 1--43, Jan. 2016.

\bibitem{sheikholeslami2018}
F.~Sheikholeslami, D.~Berberidis, and G.~B. Giannakis, ``Large-scale
  kernel-based feature extraction via low-rank subspace tracking on a budget,''
  \emph{IEEE Transactions on Signal Processing}, vol.~66, no.~8, pp.
  1967--1981, April 2018.

\bibitem{bottou2012stochastic}
L.~Bottou, ``{Stochastic gradient descent tricks},'' in \emph{{Neural networks:
  Tricks of the Trade}}.\hskip 1em plus 0.5em minus 0.4em\relax Springer, 2012,
  pp. 421--436.

\bibitem{schizas}
I.~D. Schizas, G.~Mateos, and G.~B. Giannakis, ``Distributed lms for
  consensus-based in-network adaptive processing,'' \emph{IEEE Transactions on
  Signal Processing}, vol.~57, no.~6, pp. 2365--2382, June 2009.

\bibitem{shuman2013emerging}
D.~I. Shuman, S.~K. Narang, P.~Frossard, A.~Ortega, and P.~Vandergheynst,
  ``{The emerging field of signal processing on graphs: Extending
  high-dimensional data analysis to networks and other irregular domains},''
  \emph{IEEE Signal Processing Magazine}, vol.~30, no.~3, pp. 83--98, May 2013.

\bibitem{smola2003kernels}
A.~J. Smola and R.~Kondor, ``{Kernels and regularization on graphs},'' in
  \emph{{Learning Theory and Kernel Machines}}.\hskip 1em plus 0.5em minus
  0.4em\relax Springer, 2003, pp. 144--158.

\bibitem{romerospace}
D.~Romero, V.~N. Ioannidis, and G.~B. Giannakis, ``Kernel-based reconstruction
  of space-time functions on dynamic graphs,'' \emph{IEEE Journal of Selected
  Topics in Signal Processing}, vol.~11, no.~6, pp. 856--869, Sept. 2017.

\bibitem{cotter2011explicit}
A.~Cotter, J.~Keshet, and N.~Srebro, ``Explicit approximations of the gaussian
  kernel,'' \emph{arXiv preprint arXiv:1109.4603}, 2011.

\bibitem{rahimi2008random}
A.~Rahimi and B.~Recht, ``Random features for large-scale kernel machines,'' in
  \emph{{Advances in neural Information Processing Systems}}, 2008, pp.
  1177--1184.

\bibitem{lax}
P.~Lax, \emph{{Linear Algebra and its Applications}}.\hskip 1em plus 0.5em
  minus 0.4em\relax Wiley, 2007.

\end{thebibliography}

}

\end{document}